\newtheorem{theorem}{Theorem}[section]
\newtheorem{definition}{Definition}[section]
\newtheorem{lemma}{Lemma}[section]
\newtheorem{corollary}{Corollary}[section]
\newtheorem{proposition}{Proposition}[section]
\newtheorem{example}{Example}[section]
\newcommand\IZD[1]{\ensuremath{#1|_0}\xspace}
\newcommand\IZU[1]{\ensuremath{#1|^0}\xspace}  
\newcommand\IZZ[1]{\ensuremath{#1|_{0}^0}\xspace}
\newcommand{\Ione}{{\bf(I1)}\xspace}
\newcommand{\Itwo}{{\bf(I2)}\xspace}
\newcommand{\Ithree}{{\bf(I3)}\xspace}
\newcommand{\NP}{{\bf(NP)}\xspace}
\newcommand{\IP}{{\bf(IP)}\xspace}
\newcommand{\OP}{{\bf(OP)}\xspace}
\newcommand{\CB}{{\bf(CB)}\xspace}
\newcommand{\LF}{{\bf(LF)}\xspace}
\newcommand{\LT}{{\bf(LT)}\xspace}
\newcommand{\CPN}{{\bf(CP(N))}\xspace}
\newcommand{\LCPN}{{\bf(L$\mhyphen$CP(N))}\xspace}
\newcommand{\RCPN}{{\bf(R$\mhyphen$CP(N))}\xspace}
\newcommand{\PIT}{{\bf(PI$_{\bm{T}}$)}\xspace}
\newcommand{\IRC}{\ensuremath{I_{\bm{RC}}}\xspace}
\newcommand{\ILK}{\ensuremath{I_{\bm{LK}}}\xspace}
\newcommand{\IKD}{\ensuremath{I_{\bm{KD}}}\xspace}
\newcolumntype{M}[1]{>{\centering\arraybackslash}m{#1}}
\newcommand{\cmark}{\ding{51}}%
\def\mystrut(#1,#2){\vrule height #1 depth #2 width 0pt}
\newcolumntype{C}[1]{%
	>{\mystrut(3ex,2ex)\centering}%
	p{#1}%
	<{}}
\mathchardef\mhyphen="2D
\newcommand{\I}{\ensuremath{(I_1,\ldots,I_n)_{F,\textbf{c}_1,\textbf{c}_2}}\xspace}
\newcommand{\Is}{\ensuremath{\bm{I}_{F,\textbf{c}_1,\textbf{c}_2}}\xspace}
\renewcommand*\l@figure{\@dottedtocline{1}{1em}{3em}}
\renewcommand*\l@table{\@dottedtocline{1}{1em}{3em}}
\begin{document}

\title[Article Title]{A global view of diverse construction methods of fuzzy implication functions rooted on $F$-chains}

\author*[1]{\fnm{Raquel} \sur{Fernandez-Peralta}}\email{raquel.fernandez@mat.savba.sk}

\author[2,3,4]{\fnm{Juan} \sur{Vicente Riera}}\email{jvicente.riera@uib.es}
\equalcont{These authors contributed equally to this work.}

\affil*[1]{\orgdiv{Mathematical Institute}, \orgname{Slovak Academy of Sciences}, \orgaddress{\street{\v{S}tef\'anikova~49}, \city{Bratislava}, \postcode{814 73}, \country{Slovakia}}}

\affil[2]{\orgdiv{Soft Computing, Image Processing and Aggregation (SCOPIA) research group. Dept. of Mathematics and Computer Science}, \orgname{University of the Balearic Islands}, \orgaddress{\street{Ctra. de Valldemossa, Km.7.5}, \city{Palma}, \postcode{07122}, \country{Spain}}}

\affil[3]{\orgname{Balearic Islands Health Research Institute (IdISBa)}, \orgaddress{\city{Palma}, \postcode{07010}, \country{Spain}}}
\affil[4]{\orgname{Artificial Intelligence Research Institute of the Balearic Islands (IAIB)}, \orgaddress{\city{Palma}, \postcode{07122}, \country{Spain}}}


\abstract{Fuzzy implication functions are one of the most important operators used in the fuzzy logic framework. While their flexible definition allows for diverse families with distinct properties, this variety needs a deeper theoretical understanding of their structural relationships. In this work, we focus on the study of construction methods, which employ different techniques to generate new fuzzy implication functions from existing ones. Particularly, we generalize the $F$-chain-based construction, recently introduced by Mesiar et al. to extend a method for constructing aggregation functions to the context of fuzzy implication functions.  Our generalization employs collections of fuzzy implication functions rather than single ones, and uses two different increasing functions instead of a unique $F$-chain. We analyze property preservation under this construction and establish sufficient conditions. Furthermore, we demonstrate that our generalized $F$-chain-based construction is a unifying framework for several existing methods. In particular, we show that various construction techniques, such as contraposition, aggregation, and generalized vertical/horizontal threshold methods, can be reformulated within our approach. This reveals structural similarities between seemingly distinct construction strategies and provides a cohesive perspective on fuzzy implication construction methods.}

\keywords{Fuzzy implication function, construction method, $F$-chain, aggregation function}

\maketitle

\section{Introduction}

Fuzzy implication functions are one of the main operators in fuzzy logic, where they play the role of generalizing the classical logical conditional to more general domains than $\{0,1\}$, usually the unit interval, although others have also been considered \cite{Goguen1967,Munar2023}. Since the definition of these operators only imposes monotonicity and boundary conditions, many families that fulfill these conditions can be defined. Indeed, the construction, representation, and characterization of different families of fuzzy implication functions and the study of desirable additional properties is an active research area. Further, these operators play a significant role in various applications such as approximate reasoning \cite{Jayaram2008B}, image processing \cite{Gonzalez2018}, fuzzy control \cite{Mendel2023}, data mining and knowledge extraction \cite{Fernandez-Peralta2025B,fernandez2023subgroup,Nanavati2024}; among others.

Up to now, a huge amount of families of fuzzy implication functions have been introduced so far. Although each family has its own motivation and special behavior (see \cite{Fernandez-Peralta2025} for a comprehensive overview), the boost of defining many families is associated with the fact that the need for different operators depending on the application has been widely discussed \cite{Trillas2008}. Indeed, it may not be straightforward to find an operator that fulfills the requirements of the particular problem \cite{Baczynski2020B,Fernandez-Peralta2021,Jayaram2008,Mis2025}. However, recent discussions \cite{Massanet2024} reveal the other side of the coin and stress that having too many families without a sufficient theoretical understanding of their relationships, clear practical contributions, or well-defined applicability may lead to confusion among practitioners and unnecessary overlap in the field. This highlights the importance of advancing theoretical studies that focus on characterizing, classifying, and unifying the structure of different classes of fuzzy implication functions. 

Typically, families are categorized into four main classes based on their generating mechanisms: (i) those derived from other aggregation functions, (ii) those constructed using unary functions, (iii) those defined by a fixed closed-form expression, and (iv) those obtained from other fuzzy implication functions. In this particular paper, we are interested in the latter type, which are also referred to as construction methods. Construction methods of fuzzy implication functions can be used not only to generate new operators, but they are also related to important theoretical aspects of the structure of these operators \cite{Vemuri2014}. To name a few strategies, there exist construction methods based on contrapositivisation \cite{Jayaram2006}, aggregation \cite{Reiser2013}, conjugation \cite{Baczynski2008} or ordinal sums \cite{Zhou2021B}. In this paper, we are specifically interested in the $F$-chain-based construction which was introduced in \cite{Mesiar2019} under the motivation of applying the method for constructing aggregation functions based on bijective chains presented in \cite{Jin2019} but to the case of fuzzy implication functions. 


In this work, we leverage the $F$-chain-based construction to define a new method for a family of fuzzy implication functions that is based on the use of an aggregation function and two input transformations. Although we stress in this paper that the $F$-chain condition is not necessary to ensure that the output is a fuzzy implication function, we highlight that it is required for the preservation of some desirable properties. As usual for construction methods, we study the preservation of several properties. In fact, we prove it is possible to obtain sufficient conditions to ensure the preservation of any of the considered properties, which is not generally true for less general methods. Furthermore, we show that the 
generalized $F$-chain-based construction method is closely linked to several other existing methods. Specifically, we prove that multiple distinct construction methods can be reformulated using a particular aggregation function and input transformations. Consequently, this approach not only serves as a valid new construction method but also provides a unifying framework that offers a broader perspective on various construction techniques.


The paper is structured as follows, in Section \ref{section:preliminaries} we recall some basic results on aggregation functions and fuzzy implication functions; in Section \ref{section:properties} we introduce the generalized $F$-chain-based construction method and we study the preservation of several additional properties; in Section \ref{section:construction_methods} we prove that the generalized $F$-chain-based construction method can be seen as the generalization of several other methods in the literature; finally the paper ends in Section \ref{section:conclusions} with some conclusions and future work.

\section{Preliminaries}\label{section:preliminaries}

In this section, we only introduce the results and definitions related to aggregation functions and fuzzy implication functions that are crucial for the well-comprehension of the paper. For more information about this topic we refer the reader to the books \cite{Baczynski2008,Beliakov2010,Calvo2002,Grabisch2009,Klement2000}.

First of all, we define aggregation functions and related terms, along with relevant examples.

\begin{definition}\label{def:aggregation_function}
    Let $n \in \mathbb{N}$. A monotone function $A:[0,1]^n \to [0,1]$ is called an \emph{$n$-ary aggregation function} whenever it satisfies the boundary conditions $A(0, \dots, 0) = 0$ and $A(1, \dots, 1) =1$. The class of all $n$-ary aggregation functions on $[0,1]$ will be denoted by $\mathcal{A}_n$.
\end{definition}

\begin{example}
Let $n \in \mathbb{N}$. Some examples of $n$-ary aggregation functions are the following:
\begin{itemize}
\item \emph{Maximum}: $\displaystyle F(x_1,\dots,x_n) = \max \{x_1,\dots,x_n\}$.
\item \emph{Minimum}: $\displaystyle F(x_1,\dots,x_n) = \min \{x_1,\dots,x_n\}$.
\item \emph{Product}: $\displaystyle F(x_1,\dots,x_n) = \prod_{i=1}^n x_i$.
\item \emph{Weighted arithmetic mean}: $F(x_1,\dots,x_n) = \sum_{i=1}^n w_ix_i$, were $\bm{w} = (w_1,\dots,w_n) \in [0,1]^n$ is such that $\sum_{i=1}^n w_i=1$.
\end{itemize}
\end{example}

\begin{definition}[\cite{GARCIALAPRESTA2008}]
Let $n \in \mathbb{N}$ and $F$ an $n$-ary aggregation function.
\begin{itemize}
\item An element $a \in [0,1]$ is said to be an \emph{annihilator} if $F(x_1,\dots,x_n)=a$ in case there exists $i \in \{1,\cdots,n\}$ such that $x_i=a$.
\item An element $c \in [0,1]$ is said to be \emph{idempotent} if $F(c,\dots,c)=c$.
\item An element $e \in [0,1]$ is said to be \emph{neutral} if $F(x_1,\dots,x_n)=x_i$ for any $x_i \in [0,1]$ in case that $x_j=e$ for all $j \in \{1,\dots,n\}$, $j \not = i$.
\item If $N$ is a fuzzy negation, then $F$ is \textit{self $N$-dual} if for all $(x_1,\dots,x_n)=\bm{x} \in [0,1]^n$ we have $F(N(x_1),\dots, N(x_n)) = N(F(\bm{x}))$.
\end{itemize}
\end{definition}


\begin{definition}
Let $n \in \mathbb{N}$ and $F$ an aggregation function, then a point $(x_1,\dots,x_n) \in [0,1]^n$ is called a
\begin{itemize}
\item \emph{unit multiplier} of $F$ if and only if $F(x_1,\dots,x_n)=1$ and there exists an $i \in \{1,\dots,n\}$ such that $x_i \not =1$.
\item \emph{zero multiplier} of $F$ if and only if $F(x_1,\dots,x_n)=0$ and there exists an $i \in \{1,\dots,n\}$ such that $x_i \not =0$.
\end{itemize}
\end{definition}


Next, we provide the definition of fuzzy implication function.

\begin{definition}[\cite{Baczynski2008,Fodor1994}]\label{def:implication}
	A binary operator $I:[0,1]^2 \to [0,1]$ is said to be a \emph{fuzzy implication function} if it satisfies:
	\begin{description}
		\item[\Ione]  $I(x,z)\geq I(y,z)\ $  when  $\ x\leq y$, for all $x,y,z\in[0,1]$. \hfill (Left Antitonicity)
		\item[\Itwo]  $I(x,y)\leq I(x,z)\ $  when  $\ y\leq z$, for all $x,y,z\in[0,1]$. \hfill (Right Isotonicity)
		\item[\Ithree]  $I(0,0)=I(1,1)=1$ and $I(1,0)=0$. \hfill (Boundary Condition)
	\end{description}
 The class of all fuzzy implication functions will be denoted by $\mathcal{I}$.
\end{definition}

From Definition~\ref{def:implication} it directly follows that $I(0,x) = I(x,1) = 1$ for all $x \in [0,1]$. However, the values $I(x,0)$ and $I(1,x)$ remain unfixed. In particular, the values $N_I(x)=I(x,0)$ define the \emph{natural negation} of a fuzzy implication function.

Many functions satisfy Definition~\ref{def:implication}, so these operators are typically classified by their construction methods, with numerous families documented in the literature (see \cite{Fernandez-Peralta2025} for a comprehensive survey). Beyond families, there exist many additional properties on fuzzy implication functions (see Table 1 in \cite{Fernandez-Peralta2025} for an overview). From this list, we focus on those most commonly used in the literature and relevant to our study.

\begin{itemize}[align=left]
    \item[\text{\NP}] \emph{Left neutrality}: \(I(1,y) = y\), \(y \in [0,1]\).
    \item[\text{\IP}] \emph{Identity principle}: \(I(x,x) = 1\), \(x \in [0,1]\).
    \item[\text{\OP}] \emph{Ordering property}: \(I(x,y)=1 \Leftrightarrow x \leq y\), \(x,y \in [0,1]\).
    \item[\text{\CB}] \emph{Consequent boundary}: \(I(x,y) \geq y\), \(x,y \in [0,1]\).
    \item[\text{\LF}] \emph{Lowest falsity}: \(I(x,y)=0 \Leftrightarrow x=1 \land y=0\).
    \item[\text{\LT}] \emph{Lowest truth}: \(I(x,y)=1 \Leftrightarrow x=0 \lor y=1\).
    \item[\text{\CPN}] \emph{Contrapositive symmetry} (w.r.t. \(N\)): \(I(x,y) = I(N(y),N(x))\), \(x,y \in [0,1]\).
    \item[\text{\LCPN}] \emph{Left contraposition} (w.r.t. \(N\)): \(I(N(x),y) = I(N(y),x)\), \(x,y \in [0,1]\).
    \item[\text{\RCPN}] \emph{Right contraposition} (w.r.t. \(N\)): \(I(x,N(y)) = I(y,N(x))\), \(x,y \in [0,1]\).
    \item[\text{\PIT}] \emph{\(T\)-power invariance} (w.r.t. continuous \(T\)): \(I(x,y) = I\left(x_T^{(r)}, y_T^{(r)}\right)\), for \(r > 0\), \(x,y \in (0,1)\) s.t. \(x_T^{(r)}, y_T^{(r)} \neq 0\), where the powers of the t-norm for any real value $r$ are defined as in \cite{Walker2002}.
\end{itemize}

We conclude this section by formalizing the $F$-chain-based construction method proposed in \cite{Mesiar2019}, accompanied by two illustrative examples.

\begin{definition}[\cite{Jin2019,Mesiar2019}]\label{def:fchain_construction}
    Let $F: [0,1]^n \to [0,1]$ be an aggregation function, and let $\bm{c} : [0,1] \to [0,1]^n$ be an increasing mapping such that $\bm{c}(0)=(0,\dots,0)$, $\bm{c}(1)=(1,\dots,1)$, and satisfying $F(\bm{c}(t))=t$ for all $t \in [0,1]$. Then $\bm{c}$ is called an $F$-chain. In this case, for each fuzzy implication function $I \in \mathcal{I}$, the function $I_{F,\bm{c}}:[0,1]^2 \to [0,1]$ given by
 \begin{equation}\label{eq:fchain}
    I_{F,\bm{c}}(x,y) = F(I(c_1(x),c_1(y)), \dots, I(c_n(x),c_n(y))),
 \end{equation}
 is a fuzzy implication function.
\end{definition}

\begin{example}\label{example:initial}
Two illustrative examples of the F-chain-based construction method are presented below.

\begin{itemize}\item[(i)]Let $F(x,y)=\max\{x,y\}$ and suppose $\bm{c}(t)=(t^2,t)$. Then, $\bm{c}$ is a $F$-chain. Now, if we consider the Kleene-Dienes implication $\IKD(x,y)=\max\{1-x,y\}$, with Eq. (\ref{eq:fchain}) we obtain the following fuzzy implication function
$$I(x,y)=\max\{\max\{1-x^2,y^2\},\max\{1-
x,y\}\} = \max\{1-x^2,y\}.$$
\item[(ii)] Let $F(x,y)=\min\{x,y\}$ and take $\bm{c}(t)=(c_1(t),c_2(t))$ where \\  $$c_1(t)=t, \quad c_2(t)=\left\{ \begin{array}{ll} 
t & \text{if }~ 0\leq t\leq 0.5, \\[0.5em]
2t-0.5 & \text{if }~ 0.5< t\leq 0.75, \\
[0.5em]
1 & \text{if }~ 0.75< t\leq 1.
\end{array}\right. $$
Then, $\bm{c}$ is a $F$-chain. If we consider the Łukasiewicz implication $\ILK(x,y)=\min\{1,1-x+y\}$  in Eq. (\ref{eq:fchain}) we obtain the fuzzy implication function 
$$I(x,y)=\min\{1, 1-x+y,1-c_2(x)+c_2(y)\},$$
where
$$1-c_2(x)+c_2(y)=
\left\{ \begin{array}{ll} 
x+y & \text{if }~ 0\leq x,y\leq 0.5, \\[0.5em] 
x+2y-0.5 & \text{if }~ 0\leq x\leq 0.5 \text{ and }  0.5< y\leq 0.75, \\[0.5em] 
x+1 & \text{if }~ 0\leq x\leq 0.5 \text{ and }  0.75< y\leq 1, \\[0.5em] 
2x+y-0.5 & \text{if }~ 0.5< x\leq 0.75 \text{ and }  0 \leq y\leq 0.5, \\[0.5em]
2(x+y)-1 & \text{if }~ 0.5< x,y\leq 0.75, \\[0.5em]
2x+0.5 & \text{if }~ 0.5< x\leq 0.75 \text{ and }  0.75 < y\leq 1, \\[0.5em]
1+y & \text{if }~ 0.75< x\leq 1 \text{ and }  0 \leq y\leq 0.5, \\[0.5em]
2y+0.5 & \text{if }~ 0.75< x\leq 1 \text{ and }  0.5 < y\leq 0.75, \\[0.5em]
2 & \text{if }~ 0.75< x,y\leq 1. \\[0.5em]
\end{array} \right.
$$
\end{itemize}
\end{example}

\section{Generalization of the $F$-chain-based construction method}\label{section:properties}

In this section, we provide a generalization of the $F$-chain-based construction method and we provide a thoroughly study of the preservation of several additional properties. Let us start by generalizing Definition \ref{def:fchain_construction} as follows.

 \begin{theorem}\label{th:def_method}
 Let $n \in \mathbb{N}$, $F \in \mathcal{A}_n$, $\bm{c}_i : [0,1] \to [0,1]^n$ with $i \in \{1,2\}$ be two increasing mappings such that $\bm{c}_i(0) =(0,\dots,0)$, $\bm{c}_i(1)=(1,\dots,1)$ and $I_1,\ldots,I_n\in \mathcal{I}$. Then, the function $(I_1,\ldots,I_n)_{F,\textbf{c}_1,\textbf{c}_2}:[0,1]^2\to[0,1]$ given by 
 \begin{equation}\label{eq:fchain_construction}
(I_1,\ldots,I_n)_{F,\textbf{c}_1,\textbf{c}_2}(x,y)=F(I_1(c_{1,1}(x),c_{2,1}(y)),\ldots,I_n(c_{1,n}(x),c_{2,n}(y))),
 \end{equation}
 is a fuzzy implication function. For short, we will denote \I sometimes as \Is and Eq. (\ref{eq:fchain_construction}) may be simplified using $\bigoplus$ to indicate concatenation as follows
 \begin{equation}\label{eq:fchain_construction:short}
 \Is (x,y) = F \left(\bigoplus_{i=1}^n I_i (c_{1,i}(x),c_{2,i}(y))\right).
 \end{equation}
 \end{theorem}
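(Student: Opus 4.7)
The plan is to verify the three defining conditions \Ione, \Itwo, \Ithree of Definition~\ref{def:implication} directly for the function $\Is$. All three will follow routinely by combining the monotonicity assumptions on $\bm{c}_1$, $\bm{c}_2$, each $I_i$, and $F$, together with the stipulated boundary values $\bm{c}_i(0)=(0,\ldots,0)$, $\bm{c}_i(1)=(1,\ldots,1)$, $I_i(0,0)=I_i(1,1)=1$, $I_i(1,0)=0$, $F(0,\ldots,0)=0$, and $F(1,\ldots,1)=1$.

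For \Ione, I would fix $x,y,z \in [0,1]$ with $x \le y$. Since $\bm{c}_1$ is increasing we have $c_{1,i}(x) \le c_{1,i}(y)$ for every $i \in \{1,\ldots,n\}$, so \Ione applied to each $I_i$ yields $I_i(c_{1,i}(x), c_{2,i}(z)) \ge I_i(c_{1,i}(y), c_{2,i}(z))$. Monotonicity of $F$ then gives $\Is(x,z) \ge \Is(y,z)$. For \Itwo I would use the symmetric argument: if $y \le z$ then $c_{2,i}(y) \le c_{2,i}(z)$ because $\bm{c}_2$ is increasing, \Itwo applied to each $I_i$ produces $I_i(c_{1,i}(x), c_{2,i}(y)) \le I_i(c_{1,i}(x), c_{2,i}(z))$, and monotonicity of $F$ finishes the step.

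For \Ithree I would evaluate at the three required points. At $(0,0)$, the assumption $\bm{c}_i(0)=(0,\ldots,0)$ gives $\Is(0,0) = F(I_1(0,0),\ldots,I_n(0,0)) = F(1,\ldots,1) = 1$; at $(1,1)$ the assumption $\bm{c}_i(1)=(1,\ldots,1)$ gives $\Is(1,1) = F(1,\ldots,1) = 1$; and at $(1,0)$ we combine both boundary conditions on the $\bm{c}_i$ to obtain $\Is(1,0) = F(I_1(1,0),\ldots,I_n(1,0)) = F(0,\ldots,0) = 0$.

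There is no real obstacle in the argument: the theorem is essentially a compatibility check between the monotonicity and boundary hypotheses on each ingredient. What is worth emphasising, however, is exactly which hypotheses are used where, since this clarifies why the $F$-chain condition $F(\bm{c}(t))=t$ of Definition~\ref{def:fchain_construction} has been dropped: it plays no role in ensuring \Ione--\Ithree, and as announced in the introduction is only needed later for the preservation of certain additional properties. I would make that observation explicit at the end of the proof so the reader sees why this genuinely generalises Definition~\ref{def:fchain_construction}.
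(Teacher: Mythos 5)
Your argument coincides with the paper's own proof: both verify \Ione, \Itwo and \Ithree directly by combining the monotonicity of $F$, $\bm{c}_1$, $\bm{c}_2$ and each $I_i$ with the stated boundary values, and your concluding remark that the $F$-chain condition is not needed here mirrors the discussion the paper gives immediately after the theorem. No gaps; the proposal is correct and essentially identical in approach.
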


 \begin{proof}
     Let $F \in \mathcal{A}_n$, $\bm{c}_i : [0,1] \to [0,1]^n$ with $i \in \{1,2\}$ be two increasing mappings such that $\bm{c}_i(0) =(0,\dots,0)$, $\bm{c}_i(1)=(1,\dots,1)$ and $I_1,\dots,I_n \in \mathcal{I}$, we will prove that the function in Eq. (\ref{eq:fchain_construction}) fulfills the three conditions in Definition \ref{def:implication}:
    \begin{itemize}
    \item[\Ione] Let us consider $x_1, x_2 \in [0,1]$ such that $x_1 \leq x_2$ and $y \in [0,1]$, then since $F$ and $\bm{c}_1$ are increasing and each $I_i$ is decreasing with respect to the first variable we have 
    \begin{eqnarray*}
    \I(x_1,y) & = & F(I_1(c_{1,1}(x_1),c_{1,2}(y)),\ldots,I_n(c_{1,n}(x_1),c_{2,n}(y))) \\
    & \geq & F(I_1(c_{1,1}(x_2),c_{1,2}(y)),\ldots,I_n(c_{1,n}(x_2),c_{2,n}(y))) \\
    & = & \I(x_2,y).
    \end{eqnarray*}
    \item[\Itwo] Analogous to the proof of \Ione but taking into account that $\bm{c}_2$ is an increasing function and each $I_i$ is increasing with respect to the second variable.
    \item[\Ithree] The boundary conditions are directly derived from the boundary conditions of the corresponding aggregation function, $\bm{c}_1$, $\bm{c}_2$ and fuzzy implication functions:
    \begin{eqnarray*}
    \I(1,1) & = & F(I_1(c_{1,n}(1),c_{2,1}(1)),\ldots,I_n(c_{1,n}(1),c_{2,n}(1))) \\
    &=& F(I_1(1,1),\ldots,I_n(1,1)) = F(1,\dots,1)=1,
    \end{eqnarray*}
    \begin{eqnarray*}
    \I(0,0) & = & F(I_1(c_{1,1}(0),c_{2,1}(0)),\ldots,I_n(c_{1,n}(0),c_{2,n}(0))) \\
    &=& F(I_1(0,0),\ldots,I_n(0,0)) 
    = F(1,\dots,1)=1,
    \end{eqnarray*}
    \begin{eqnarray*}
    \I(1,0) & = & F(I_1(c_{1,1}(1),c_{2,1}(0)),\ldots,I_n(c_{1,n}(1),c_{2,n}(0))) 
    \\&=& F(I_1(1,0),\ldots,I_n(1,0)) = F(0,\dots,0)=0.
    \end{eqnarray*}
    \end{itemize}
 \end{proof}

 In Figure \ref{fig:diagram1} the reader can find a visual diagram that captures the generalized $F$-chain-based construction method. Notice that if in Theorem \ref{th:def_method} we consider $\bm{c}=\bm{c}_1 = \bm{c}_2$, $I=I_1=\dots=I_n$ and $F(\bm{c}(t))=t$ for all $t \in [0,1]$ we recover the construction method proposed in Definition \ref{def:fchain_construction}. Therefore, to ensure that from the construction method we obtain a fuzzy implication function, different increasing mappings can be used to transform each variable and a family of fuzzy implication functions can be used instead from a single one. In particular, $F(\bm{c}(t))=t$ in the definition of $F$-chain is not necessary for ensuring the conditions of a fuzzy implication function. Nonetheless, we will see that this condition is crucial for ensuring some properties that might be desirable, i.e., for ensuring the preservation of some additional properties either $\bm{c}_1$ or $\bm{c}_2$ have to be an $F$-chain.  On the other hand, we emphasize that the $F$-chain condition was introduced to generalize $\phi$-transformations of aggregation functions (where $\phi$ is an increasing automorphism on $[0,1]$), and in that framework it plays a crucial role in establishing key theoretical results.

 For the subsequent part of this section, we study the preservation of several additional properties for the proposed construction method.

 We start by considering the left neutral principle and the consequent boundary. In this case, when all the members of the family of fuzzy implication functions satisfy these properties and $\bm{c}_2$ is an $F$-chain, the construction preserves them. Then, we provide two examples to illustrate how the generalized $F$-chain-based construction method may generate new fuzzy implication functions that satisfy these properties.

 \begin{figure}[t]
\centering
\includegraphics[scale=0.2]{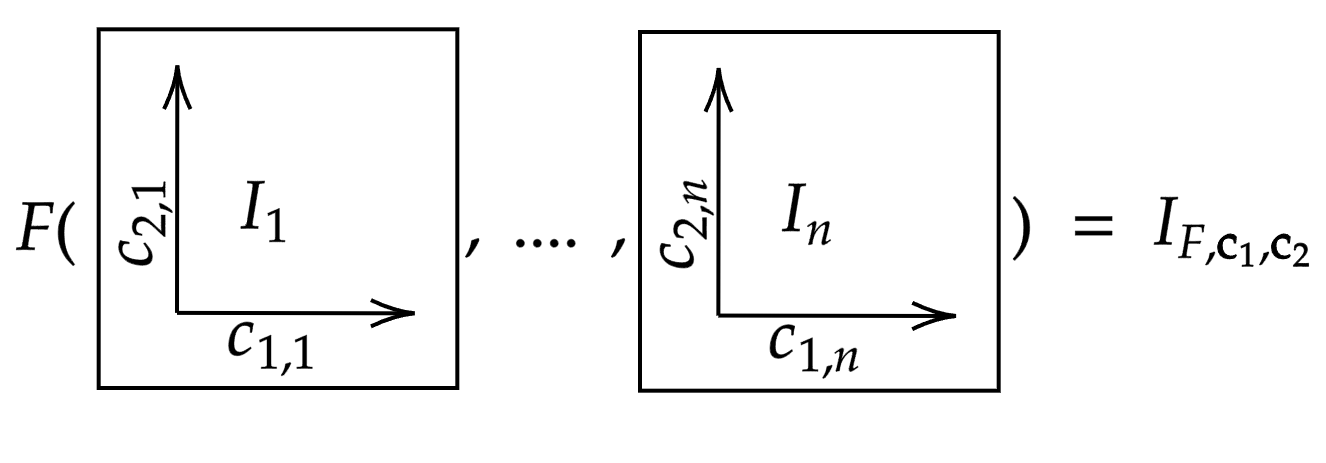}
\caption{Graphic diagram of the generalized $F$-chain-based construction method.}\label{fig:diagram1}
\end{figure}


\begin{proposition}\label{prop:consequent_boundary}
Let $n \in \mathbb{N}$, $F \in \mathcal{A}_n$, $\mathbf{c_2}$ an $F$-chain and $I_1,\ldots,I_n\in \mathcal{I}$. If $F(\bm{c}_2(t))=t$ for all $t\in [0,1]$ then the following statements hold:
\begin{enumerate}[(i)]
\item If $I_1,\ldots,I_n\in \mathcal{I}$ satisfy \NP, then \I satisfies \NP.
\item If $I_1,\ldots,I_n\in \mathcal{I}$ satisfy \CB, then \I satisfies \CB.
\end{enumerate}
\end{proposition}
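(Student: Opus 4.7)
The plan is to reduce both items to direct computations that chain together three ingredients already available: the boundary values $\bm{c}_1(1)=(1,\ldots,1)$ and the assumed properties of the individual $I_i$'s, the monotonicity of $F$, and the $F$-chain identity $F(\bm{c}_2(t))=t$ used in the final step to recover~$y$.

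For (i), I would evaluate \Is at $(1,y)$ using Eq. (\ref{eq:fchain_construction}). Since $\bm{c}_1(1)=(1,\ldots,1)$ gives $c_{1,i}(1)=1$ for every $i$, the expression collapses to
\[
\Is(1,y)=F\bigl(I_1(1,c_{2,1}(y)),\ldots,I_n(1,c_{2,n}(y))\bigr).
\]
Applying \NP to each $I_i$ turns every coordinate $I_i(1,c_{2,i}(y))$ into $c_{2,i}(y)$, so the right-hand side becomes $F(\bm{c}_2(y))$. The $F$-chain hypothesis on $\bm{c}_2$ now yields $F(\bm{c}_2(y))=y$, which is exactly \NP for \Is.

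For (ii), the argument is analogous but replaces the equality on the first coordinate with an inequality obtained from monotonicity. From \CB for each $I_i$, applied at $(c_{1,i}(x),c_{2,i}(y))$, I get $I_i(c_{1,i}(x),c_{2,i}(y))\geq c_{2,i}(y)$ for every $i$. Since $F$ is increasing and $F(\bm{c}_2(y))=y$ by the $F$-chain condition, it follows that
\[
\Is(x,y)=F\bigl(I_1(c_{1,1}(x),c_{2,1}(y)),\ldots,I_n(c_{1,n}(x),c_{2,n}(y))\bigr)\geq F(\bm{c}_2(y))=y,
\]
as required.

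I do not anticipate any real obstacle here; the only point worth flagging is the asymmetric role played by $\bm{c}_1$ and $\bm{c}_2$. Part (i) relies on the boundary value $\bm{c}_1(1)=(1,\ldots,1)$ to erase the first argument before invoking \NP, whereas part (ii) never needs any information about $\bm{c}_1$, because \CB is a one-sided inequality that propagates through the monotonicity of $F$. In both cases the $F$-chain condition is used only on $\bm{c}_2$, which clarifies why the proposition hypothesizes $F(\bm{c}_2(t))=t$ rather than the analogous identity for $\bm{c}_1$.
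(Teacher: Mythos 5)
Your proposal is correct and follows essentially the same argument as the paper: for (i) use $c_{1,i}(1)=1$, apply \NP coordinatewise, and invoke $F(\bm{c}_2(y))=y$; for (ii) apply \CB coordinatewise and push the inequality through the monotonicity of $F$ before using the $F$-chain identity. Your closing remark on the asymmetric roles of $\bm{c}_1$ and $\bm{c}_2$ is a correct observation that the paper does not state explicitly but which is consistent with its hypotheses.
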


\begin{proof}~
\begin{enumerate}[(i)]
 \item Let $y \in [0,1]$, then
 \begin{eqnarray*}
 \I(1,y) &=& F(I_1(c_{1,1}(1),c_{2,1}(y)),\ldots,I_n(c_{1,n}(1),c_{2,n}(y))) \\
 &=& F(I_1(1,c_{2,1}(y)),\ldots,I_n(1,c_{2,n}(y))) \\
 & = & F(c_{2,1}(y),\ldots,c_{2,n}(y)) = F(\bm{c}_2(y))= y.
 \end{eqnarray*}
\item Let $x,y \in [0,1]$, since each $I_i$ satisfies \CB we have $I_i(c_{1,i}(x),c_{2,i}(y)) \geq c_{2,i}(y)$ and since $F$ is increasing we obtain
     \begin{eqnarray*}
    \I(x,y) &=& F(I_1(c_{1,1}(x),c_{2,1}(y)), \dots, I_n(c_{1,n}(x),c_{2,n}(y))) \\
    &\geq & F(c_{2,1}(y), \dots, c_{2,n}(y)) \\
    &=& F(\bm{c}_2(y)) = y.
    \end{eqnarray*}
\end{enumerate}
\end{proof}

\begin{example}\label{example:CB-NP}
Let us consider the Łukasiewicz and Kleene-Dienes implications (see Example \ref{example:initial}), we know that both satisfy \NP and \CB \cite{Baczynski2008}. Now, we consider $F(x,y)=xy$, $\bm{c}_1=(2t-t^2,t)$ and   $\bm{c}_2=(\sqrt{t},\sqrt{t})$.  In this case
$$
F(c_{2,1}(t),c_{2,2}(t))=\sqrt{t} \cdot \sqrt{t} =t,
$$
and $\bm{c}_2$ is an $F$-chain. Then, in this case the fuzzy implication function obtained via Eq. (\ref{eq:fchain_construction}) satisfies both \NP and \CB. Indeed, we have
\begin{eqnarray*}
(I_1,I_2)_{F,\bm{c}_1,\bm{c}_2}(x,y) &=& F(\ILK(2x-x^2,\sqrt{y}),\IKD(x,\sqrt{y})) \\
&=&\min\{1,1-2x+x^2+\sqrt{y}\}\max\{1-x,\sqrt{y}\},
\end{eqnarray*}
with,
$(I_1,I_2)_{F,\bm{c}_1,\bm{c}_2}(1,y)=\min\{1,\sqrt{y}\}\max\{0,\sqrt{y}\}=y$ and $(I_1,I_2)_{F,\bm{c}_1,\bm{c}_2}(x,y)= \min\{1,1-2x+x^2+\sqrt{y}\}\max\{1-x,\sqrt{y}\}\geq \sqrt{y}\sqrt{y}=y$ for all $x,y \in [0,1]$.
\end{example}

Next, we investigate the structure of the natural negation in relation to the natural negations of a collection of fuzzy implication functions. Interestingly, when all the original fuzzy implication functions share the same strong natural negation, $\bm{c}_1$ forms an $F$-chain, and $F$ is a self $N$-dual aggregation function, the construction method preserves the same natural negation.

 \begin{proposition}\label{prop:natural_negation}
 Let $n \in \mathbb{N}$, $F \in \mathcal{A}_n$, $\bm{c}_i : [0,1] \to [0,1]^n$ with $i \in \{1,2\}$ be two increasing mappings such that $\bm{c}_i(0) =(0,\dots,0)$, $\bm{c}_i(1)=(1,\dots,1)$ and $I_1,\ldots,I_n\in \mathcal{I}$. Then, the natural negation of \I is given by
\begin{equation}\label{eq:natural_negation}
N_{\I}(x) = F(N_{I_1}(c_{1,1}(x)), \dots, N_{I_n}(c_{1,n}(x))).
\end{equation}
Moreover, if $N_{I_i} = N$ for all $i \in \{1,\dots,n\}$, where $N$ is a strong negation, $F(\bm{c}_1(t))=t$ for all $t \in [0,1]$ and $F$ is a self $N$-dual aggregation function, then $N_{\I}(x) = N$.
\end{proposition}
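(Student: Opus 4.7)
The strategy is entirely by direct computation, exploiting the boundary conditions on $\bm{c}_1,\bm{c}_2$ together with the definition of the natural negation. For the first part, the plan is to evaluate $N_{\I}(x) = \I(x,0)$ using the defining formula \eqref{eq:fchain_construction}, which gives
\begin{equation*}
N_{\I}(x) = F(I_1(c_{1,1}(x), c_{2,1}(0)), \dots, I_n(c_{1,n}(x), c_{2,n}(0))).
\end{equation*}
The crucial step is to invoke the boundary condition $\bm{c}_2(0) = (0,\dots,0)$ to conclude $c_{2,i}(0) = 0$ for every $i$, so that each factor $I_i(c_{1,i}(x),0)$ collapses to $N_{I_i}(c_{1,i}(x))$ by the very definition of the natural negation. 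This yields formula \eqref{eq:natural_negation} and concludes the first part.

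For the second part, I would simply chain three observations on top of \eqref{eq:natural_negation}. First, substitute the common value $N_{I_i} = N$ to obtain
\begin{equation*}
N_{\I}(x) = F(N(c_{1,1}(x)), \dots, N(c_{1,n}(x))).
\end{equation*}
Second, apply the self $N$-duality of $F$ to pull $N$ outside, producing $N_{\I}(x) = N(F(\bm{c}_1(x)))$. Third, use the $F$-chain-type hypothesis $F(\bm{c}_1(t)) = t$ with $t = x$ to conclude that $N_{\I}(x) = N(x)$, which is the intended meaning of the equality $N_{\I} = N$ stated in the proposition.

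There is no serious obstacle to overcome: the argument is a linear sequence of substitutions in which each hypothesis (boundary condition on $\bm{c}_2$, common natural negation, self $N$-duality of $F$, and $F$-chain condition on $\bm{c}_1$) is invoked exactly once, in that order. The one subtlety worth flagging is that the strongness of $N$ is never actually used in the derivation itself; it appears in the hypotheses presumably because self $N$-duality is the natural setting when $N$ is involutive and because strongness guarantees that the resulting $N_{\I}$ is itself a strong negation rather than merely a fuzzy negation, which is the structurally meaningful outcome.
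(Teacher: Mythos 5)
Your proposal is correct and follows essentially the same two-step computation as the paper: evaluate $\I(x,0)$ using $c_{2,i}(0)=0$ to obtain Eq.~\eqref{eq:natural_negation}, then chain the common negation, self $N$-duality of $F$, and $F(\bm{c}_1(x))=x$ for the second claim. Your side remark that the strongness of $N$ is not actually invoked in the derivation is also accurate with respect to the paper's own proof.
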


\begin{proof}
First, let us prove Eq. (\ref{eq:natural_negation})
\begin{eqnarray*}
N_{\I}(x) &=& \I(x,0) = F(I_1(c_{1,1}(x),c_{2,1}(0)),\dots, I_n(c_{1,n}(x),c_{2,n}(0))) \\
& = & F(I_1(c_{1,1}(x),0),\dots, I_n(c_{1,n}(x),0)) \\
& = & F(N_{I_1}(c_{1,1}(x)),\dots, N_{I_n}(c_{1,n}(x))).
\end{eqnarray*}
On the other hand, let us assume that $N$ is a strong negation such that $N_{I_i} = N$ for all $i \in \{1,\dots,n\}$ and $F$ a self $N$-dual aggregation function, then
\begin{eqnarray*}
F(N_{I_1}(c_{1,1}(x)),\dots, N_{I_n}(c_{1,n}(x))) & = & F(N(c_{1,1}(x)),\dots, N(c_{1,n}(x))) \\ 
& = & N(F(c_{1,1}(x), \dots, c_{1,n}(x))) \\
&=& N(F(\bm{c}_1(x))) = N(x).
\end{eqnarray*}
\end{proof}
\begin{example}\label{example:negation}
Let us consider the classical negation $N_c(x)=1-x$, $I_1(x,y)=\min(1,1-x+y)$ the \L ukasiewicz implication, $I_2(x,y)=1-x+xy$ the Reichenbach implication and $I_3(x,y)=\max(1-x,y)$ the Kleene-Dienes implication. These three fuzzy implication functions have as their natural negation the classical negation $N_c$. Now, let us consider as $F$ the following self $N_c$-dual aggregation function $F(x_1,x_2,x_3)=\frac{\max(x_1,x_2,x_3)+\min(x_1,x_2,x_3)}{2}$ (see \cite{GARCIALAPRESTA2008}), the $F$-chain $\mathbf{c_1}(t)=(t^2,2t-t^2,t)$ and the increasing mapping $\mathbf{c_2}(t)=(t,t^2,t^3)$ (which is not an $F$-chain). In these conditions, and according to  Proposition \ref{eq:natural_negation}, we have \begin{eqnarray*}
F(N_{I_1}(c_{1,1}(x)),N_{I_2}(c_{1,1}(x)), N_{I_3}(c_{1,3}(x))) & = & N_c(x).
\end{eqnarray*}
\end{example}

Subsequently, we study the identity and ordering principles. In this case, we also find sufficient conditions involving $\bm{c}_1$ and $\bm{c}_2$ for the properties to be preserved.
\begin{proposition} Let $n \in \mathbb{N}$, $F \in \mathcal{A}_n$, $\bm{c}_i : [0,1] \to [0,1]^n$ with $i \in \{1,2\}$ be two increasing mappings such that $\bm{c}_i(0) =(0,\dots,0)$, $\bm{c}_i(1)=(1,\dots,1)$ and $I_1,\ldots,I_n\in \mathcal{I}$.  If $c_{1,j}(x) \leq c_{2,j}(x)$ for all $x \in [0,1]$ and $j \in \{1,\dots,n\}$ then the following statement holds: if $I_1,\ldots,I_n$ satisfy \IP then \I also satisfies \IP.
 \end{proposition}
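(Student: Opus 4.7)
The plan is to unfold the definition of \Is at the diagonal point $(x,x)$ and show that each argument fed to $F$ collapses to $1$, after which the boundary condition on $F$ finishes the job.

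First I would write
\[
\Is(x,x)=F\bigl(I_1(c_{1,1}(x),c_{2,1}(x)),\dots,I_n(c_{1,n}(x),c_{2,n}(x))\bigr),
\]
and fix $j \in \{1,\dots,n\}$. The hypothesis gives $c_{1,j}(x) \leq c_{2,j}(x)$, so I want to argue $I_j(c_{1,j}(x),c_{2,j}(x))=1$. This is where the combination of \IP and monotonicity kicks in: by \IP applied at $c_{1,j}(x)$ we have $I_j(c_{1,j}(x),c_{1,j}(x))=1$, and then \Itwo (right isotonicity) together with $c_{1,j}(x)\leq c_{2,j}(x)$ yields $I_j(c_{1,j}(x),c_{2,j}(x))\geq I_j(c_{1,j}(x),c_{1,j}(x))=1$, forcing equality since the codomain is $[0,1]$.

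Finally I would substitute these values into the expression for $\Is(x,x)$, so that the $F$-arguments are all equal to $1$, and invoke the boundary condition $F(1,\dots,1)=1$ from Definition \ref{def:aggregation_function} to conclude $\Is(x,x)=1$ for every $x\in[0,1]$, which is exactly \IP.

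There is no real obstacle here: the argument is an elementary chaining of \IP with \Itwo, and the assumption $c_{1,j}(x)\leq c_{2,j}(x)$ is precisely what bridges the two coordinates so that the diagonal value of each $I_j$ is reached. Notably, neither an $F$-chain condition nor any assumption on $F$ beyond being an aggregation function is needed, which is worth highlighting in contrast with the previous propositions of this section.
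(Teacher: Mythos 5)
Your proof is correct and essentially matches the paper's argument: both unfold $\Is(x,x)$, use the hypothesis $c_{1,j}(x)\leq c_{2,j}(x)$ together with \IP and a monotonicity axiom to force each component to equal $1$, and conclude via $F(1,\dots,1)=1$. The only (immaterial) difference is that you reach the diagonal at $c_{1,j}(x)$ via \Itwo, whereas the paper reaches it at $c_{2,j}(x)$ via \Ione and the monotonicity of $F$.
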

 \begin{proof}
 \item Let $x \in [0,1]$, then
 \begin{eqnarray*}
 \I(x,x) &=& F(I_1(c_{1,1}(x),c_{2,1}(x)),\ldots,I_n(c_{1,n}(x),c_{2,n}(x))) \\
 & \geq & F(I_1(c_{2,1}(x),c_{2,1}(x)),\ldots,I_n(c_{2,n}(x),c_{2,n}(x))) \\
 &=& F(1,\ldots,1)= 1.
 \end{eqnarray*}
 \end{proof}
\begin{proposition}\label{prop:ordering_property}
Let $n \in \mathbb{N}$, $F \in \mathcal{A}_n$, $\bm{c}_i : [0,1] \to [0,1]^n$ with $i \in \{1,2\}$ be two increasing mappings such that $\bm{c}_i(0) =(0,\dots,0)$, $\bm{c}_i(1)=(1,\dots,1)$ and $I_1,\ldots,I_n\in \mathcal{I}$. If $c_{1,j}(x) \leq c_{2,j}(x)$ for all $x \in [0,1]$ and $j \in \{1,\dots,n\}$, $F$ has no unit multipliers and $I_1,\ldots,I_n\in \mathcal{I}$ satisfy \OP, then \I satisfies \OP.
\end{proposition}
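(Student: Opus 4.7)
The plan is to verify the two implications of \OP separately, reducing each to the available \OP of the individual $I_j$ through the definition of \I. For the forward implication $x \leq y \Rightarrow \I(x,y) = 1$, monotonicity of $c_{1,j}$ gives $c_{1,j}(x) \leq c_{1,j}(y)$, and the standing hypothesis $c_{1,j}(y) \leq c_{2,j}(y)$ chains into $c_{1,j}(x) \leq c_{2,j}(y)$ for every $j$. Applying \OP of each $I_j$ then makes every coordinate of the aggregation in \eqref{eq:fchain_construction} equal to $1$, and the boundary condition $F(1,\dots,1) = 1$ closes this half.

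For the converse $\I(x,y) = 1 \Rightarrow x \leq y$, the no-unit-multipliers assumption enters exactly once: since $F$ sends no tuple other than $(1,\dots,1)$ to $1$, the equality $F(I_1(c_{1,1}(x),c_{2,1}(y)),\dots,I_n(c_{1,n}(x),c_{2,n}(y))) = 1$ forces every coordinate $I_j(c_{1,j}(x),c_{2,j}(y))$ to equal $1$, and \OP of each $I_j$ then yields the system $c_{1,j}(x) \leq c_{2,j}(y)$ for all $j$. The aim is now to convert this system back into $x \leq y$, which I would attempt by contradiction: suppose $x > y$ and try to produce some coordinate $j$ where $c_{1,j}(x) > c_{2,j}(y)$, contradicting the derived system. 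The relevant ingredients are the order-preservation built into $\bm{c}_1$ and $\bm{c}_2$, the hypothesis $c_{1,j}(x) \leq c_{2,j}(x)$, and the monotonicity chain $c_{2,j}(y) \leq c_{2,j}(x)$, which must combine to give strict separation in at least one coordinate.

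This last coordinate-separation step is the delicate point of the argument, since monotonicity alone only gives $c_{1,j}(x) \leq c_{2,j}(x)$ and $c_{2,j}(y) \leq c_{2,j}(x)$, both of which are compatible with $c_{1,j}(x) \leq c_{2,j}(y)$; closing the gap will require exploiting strict order behaviour of the transformations. The forward direction, the handling of the boundary values, and the bookkeeping around $F$ are routine and follow the same pattern already used in Proposition \ref{prop:consequent_boundary}.
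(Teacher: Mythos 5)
Your argument mirrors the paper's proof almost exactly: both reduce $\Is(x,y)=1$ to the coordinatewise system $c_{1,j}(x)\leq c_{2,j}(y)$ via the no-unit-multipliers hypothesis and \OP of each $I_j$, and both handle the forward direction by chaining $c_{1,j}(x)\leq c_{1,j}(y)\leq c_{2,j}(y)$. The difference is that you explicitly flag the final step --- recovering $x\leq y$ from the system --- as unclosed, whereas the paper dismisses it with ``since $c_{1,j}$, $c_{2,j}$ are increasing and $c_{1,j}(x)\leq c_{2,j}(x)$,'' which only justifies the direction you already proved.

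Your suspicion about that step is correct, and you should not expect to close it: under the stated hypotheses it is false. Take $n=1$ (or any $n$ with identical coordinates), $F=\min$, $I_j=\ILK$ the \L ukasiewicz implication, $c_{1,j}(x)=x^2$ and $c_{2,j}(y)=\sqrt{y}$. These are strictly increasing bijections of $[0,1]$ with $c_{1,j}(x)=x^2\leq x\leq\sqrt{x}=c_{2,j}(x)$, $F$ has no unit multipliers, and each $I_j$ satisfies \OP. Yet for $x=0.6$, $y=0.4$ one has $c_{1,j}(x)=0.36\leq 0.632\approx c_{2,j}(y)$, so every coordinate equals $1$ and $\Is(0.6,0.4)=1$ although $0.6>0.4$; \OP fails. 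The missing ingredient is an extra hypothesis of the form ``for every $x>y$ there exists $j$ with $c_{2,j}(y)<c_{1,j}(x)$'' (equivalently, $x\leq y \Leftrightarrow c_{1,j}(x)\leq c_{2,j}(y)$ for all $j$), which holds automatically in the case $\bm{c}_1=\bm{c}_2$ with at least one injective coordinate --- the setting in which the table's claims for the aggregation-based methods are recovered. So your proof attempt is incomplete, but the gap you identified is a genuine gap in the proposition itself, not merely in your argument.
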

\begin{proof}
\begin{eqnarray*}
1 = \I(x,y) & \Leftrightarrow & 1 = F(I_1(c_{1,1}(x),c_{2,1}(y)), \dots, I_n(c_{1,n}(x),c_{2,n}(y)))\\
& \Leftrightarrow & 
1 = I_j(c_{1,j}(x),c_{2,j}(y)) \text{ for all } j \in \{1,\dots,n\} \\
& \Leftrightarrow & c_{1,j}(x) \leq c_{2,j}(y) \text{ for all } j \in \{1,\dots,n\} \\
& \Leftrightarrow & x \leq y,
\end{eqnarray*}
since $c_{1,j}$, $c_{2,j}$ are increasing and $c_{1,j}(x) \leq c_{2,j}(x)$.
\end{proof}


We proceed to study in which cases the generalization of the $F$-chain-based construction method preserves any of the contrapositions with respect to any fuzzy negation. In this case, it is sufficient that the two increasing functions, $\bm{c}_1$ and $\bm{c}_2$ are the same one and they have to commute with the considered negation.

\begin{proposition}\label{prop:contrapositivisations}
    Let $n \in \mathbb{N}$, $F \in \mathcal{A}_n$, $\bm{c}_i : [0,1] \to [0,1]^n$ with $i \in \{1,2\}$ be two increasing mappings such that $\bm{c}_i(0) =(0,\dots,0)$, $\bm{c}_i(1)=(1,\dots,1)$ and $I_1,\ldots,I_n\in \mathcal{I}$. If $\bm{c}_1=\bm{c}_2=\bm{c}$ and $N$ is fuzzy negation such that $c_j(N(t)) = N(c_j(t))$ for all $j \in \{1,\dots,n\}$, $t \in [0,1]$ then the following statements hold:
    \begin{enumerate}[(i)]
    \item If $I_1,\ldots,I_n\in \mathcal{I}$ satisfy \CPN, then \I satisfies \CPN.
    \item If $I_1,\ldots,I_n\in \mathcal{I}$ satisfy \LCPN, then \I satisfies \LCPN.
    \item If $I_1,\ldots,I_n\in \mathcal{I}$ satisfy \RCPN, then \I satisfies \RCPN.
    \end{enumerate}
\end{proposition}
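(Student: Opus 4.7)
The plan is to prove each of (i), (ii), (iii) by a direct chain of equalities, unfolding the definition in Eq.~(\ref{eq:fchain_construction}) under the hypothesis $\bm{c}_1=\bm{c}_2=\bm{c}=(c_1,\ldots,c_n)$, which simplifies the construction to
$$\Is(x,y) = F\bigl(I_1(c_1(x), c_1(y)), \ldots, I_n(c_n(x), c_n(y))\bigr).$$
The key algebraic fact I would exploit is the componentwise commutation $c_j \circ N = N \circ c_j$: it lets me slide $N$ through each $c_j$ freely in either direction, so the appropriate contraposition property of each $I_j$ can be applied slot by slot inside the outer aggregation $F$.

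For item (i), I would start from $\Is(N(y), N(x))$, expand it according to the formula above, and apply $c_j(N(\cdot))=N(c_j(\cdot))$ in both coordinates of each slot to rewrite the $j$-th argument of $F$ as $I_j(N(c_j(y)), N(c_j(x)))$. Applying \CPN to each $I_j$ then turns this into $I_j(c_j(x), c_j(y))$, and folding back through $F$ recovers $\Is(x,y)$.

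For item (ii), I would start from $\Is(N(x), y)$, push $N$ through $c_j$ in the first coordinate to reach an argument $I_j(N(c_j(x)), c_j(y))$ in each slot, apply \LCPN of each $I_j$ to swap it to $I_j(N(c_j(y)), c_j(x))$, and finally push $N$ back through $c_j$ to obtain $I_j(c_j(N(y)), c_j(x))$, whose $F$-aggregate is $\Is(N(y), x)$. Item (iii) is entirely symmetric: the $N$-slide and the application of \RCPN take place in the second coordinate instead of the first.

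I do not anticipate a serious obstacle. The only subtle point is that each contraposition identity must be invoked slot by slot on the $I_j$'s before anything is folded up through $F$; we do not need $F$ itself to be self $N$-dual here, in contrast with the situation of Proposition~\ref{prop:natural_negation}, because $N$ is never extracted from inside $F$, it is only transported between coordinates of the same vector $\bm{c}$. This viewpoint also clarifies why the hypothesis $\bm{c}_1=\bm{c}_2$ is essential: the contraposition axioms exchange the two arguments of each $I_j$, and this exchange is consistent with the outer construction only when both arguments are transformed by the same $c_j$.
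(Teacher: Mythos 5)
Your proposal is correct and follows essentially the same route as the paper's proof: unfold the definition, use $c_j\circ N=N\circ c_j$ to transport $N$ across each coordinate, apply the relevant contraposition axiom slot by slot inside $F$, and fold back. Your closing remarks on why $F$ need not be self $N$-dual and why $\bm{c}_1=\bm{c}_2$ is needed are accurate but not part of the paper's argument.
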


\begin{proof} First, we prove Point (i).
    \begin{eqnarray*}
    \I(N(y),N(x)) &=& F(I_1(c_{1}(N(y)),c_{1}(N(x))), \dots, I_n(c_{n}(N(y)),c_{n}(N(x)))) \\
    &=& F(I_1(N(c_{1}(y)),N(c_{1}(x))), \dots, I_n(N(c_{1}(y)),N(c_n(x)))) \\
    &=& F(I_1(c_1(x),c_1(y)), \dots, I_n(c_n(x),c_n(y))) \\
    &=& \I(x,y) \\
    \end{eqnarray*}
Now, we verify Point (ii).
    \begin{eqnarray*}
    \I(N(x),y) &=& F(I_1(c_1(N(x)),c_1(y)), \dots, I_n(c_n(N(x)),c_n(y))) \\
    &=& F(I_1(N(c_1(x)),c_1(y)), \dots, I_n(N(c_n(x)),c_n(y))) \\
    &=& F(I_1(N(c_1(y)),c_1(x)), \dots, I_n(N(c_n(y)),c_n(x))) \\
    &=& F(I_1(c_1(N(y)),c_1(x)), \dots, I_n(c_n(N(y)),c_n(x))) \\
    &=& \I(N(y),x).
    \end{eqnarray*}
Finally, Point (iii) is analogous to the proof of Point (ii).
\end{proof}

\begin{example}\label{ex:cpn} 
Let $N_c(x)=1-x$ be the classical fuzzy negation, we consider
$$
c_j(x) = \sin^2\left(\frac{\pi \cdot x}{2}\right), \quad \text{for all } x \in [0,1] \text{ and } j \in \{1,\dots,n\}.
$$
It is clear that all $c_j$ are increasing, $c_j(0)=0$, $c_j(1)=1$ and
$$
c_j(x) + c_j(1-x)=\sin^2\left(\frac{\pi \cdot x}{2}\right) + \sin^2\left(\frac{\pi \cdot (1-x)}{2}\right)=1.
$$
Then, for any family of fuzzy implication functions $I_1,\ldots,I_n\in \mathcal{I}$ such that satisfy \CPN, \LCPN or \RCPN with respect to $N_c$, since $N_c$ is a strong negation by \cite[Proposition 1.4.3]{Baczynski2008} each $I_i$ satisfies the three contrapositions with respect to $N_c$ and for any $n$-ary aggregation function $F$ the fuzzy implication function \I will satisfy \CPN, \LCPN or \RCPN, respectively. For instance, let us consider $F$ as the weighted arithmetic mean with respect to $\bm{w}$ and $I_i$ be an $(S,N)$-implication generated by a t-conorm $S_i$ and $N=N_c$, then 
$$
I(x,y)= \sum_{i=1}^n w_iS_i\left(\sin^2\left(\frac{\pi \cdot (1-x)}{2}\right),\sin^2\left(\frac{\pi \cdot y}{2}\right)\right),
$$
is a fuzzy implication function that satisfies the three contrapositions with respect to $N_c$.
In Figure \ref{fig:cpn} the reader can find a graphical example in which each $S_i$ is a Yager t-conorm  $ S_{\lambda_i}^Y$ with $\lambda_i = \frac{1}{i}$ (see \cite[Section A.8]{Klement2000}) and $w_i = \left(\frac{1}{3},\frac{1}{3},\frac{1}{3}\right)$.

\begin{figure}[h]
\centering
\includegraphics[scale=0.35]{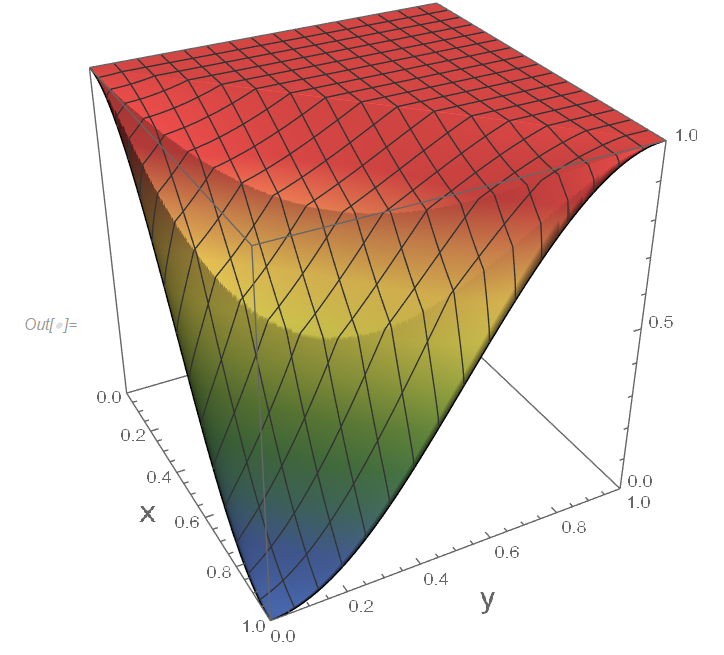}
\caption{Contruction of fuzzy implication function in Example \ref{ex:cpn}.}\label{fig:cpn}
\end{figure}
\end{example}

We now study the lowest truth and falsity properties. In this case, we need to use an aggregation function which has no zero or unit multipliers, and functions $\bm{c}_1$ and $\bm{c}_2$ which only value 0 or 1 on the boundaries.

\begin{proposition}\label{prop:lowest_truth_falsity}
Let $n \in \mathbb{N}$, $F \in \mathcal{A}_n$, $\bm{c}_i : [0,1] \to [0,1]^n$ with $i \in \{1,2\}$ be two increasing mappings such that $\bm{c}_i(0) =(0,\dots,0)$, $\bm{c}_i(1)=(1,\dots,1)$ and $I_1,\ldots,I_n\in \mathcal{I}$. If $c_{i,j}(t) \in (0,1)$ for all $t \in (0,1)$, $i \in \{1,2\}$ and 
$j \in \{1,\dots,n\}$, $F$ has no unit multipliers (resp. zero multipliers) and $I_1,\ldots,I_n\in \mathcal{I}$ satisfy \LT (resp. \LF), then \I satisfies \LT (resp. \LF).
\end{proposition}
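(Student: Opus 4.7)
The plan is to establish both claims (\LT and \LF) in parallel, since they share the same structural mechanism. The first step is to record a bridge observation from the hypothesis on $\bm{c}_i$: since every $c_{i,j}$ is increasing with $c_{i,j}(0)=0$, $c_{i,j}(1)=1$ and $c_{i,j}(t)\in(0,1)$ for $t\in(0,1)$, monotonicity pins down the preimages of the boundary values, giving $c_{1,j}(x)=0 \Leftrightarrow x=0$ and $c_{2,j}(y)=1 \Leftrightarrow y=1$, together with the analogous equivalences $c_{1,j}(x)=1 \Leftrightarrow x=1$ and $c_{2,j}(y)=0 \Leftrightarrow y=0$. This is the mechanism through which pointwise boundary information at the level of the transformed arguments is transferred back to $(x,y)$.

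For \LT, the reverse direction is immediate: if $x=0$ then each $I_j(0,c_{2,j}(y))=1$ by \Ithree and \Itwo, while if $y=1$ then each $I_j(c_{1,j}(x),1)=1$, so $F$ aggregates ones and $\I(x,y)=F(1,\ldots,1)=1$. For the forward direction, assume $\I(x,y)=1$ and invoke the no-unit-multipliers hypothesis on $F$ to conclude that $I_j(c_{1,j}(x),c_{2,j}(y))=1$ for every $j$; applying \LT for each $I_j$ yields the pointwise disjunction $c_{1,j}(x)=0$ or $c_{2,j}(y)=1$, which the bridge observation translates into $x=0$ or $y=1$. The \LF case proceeds analogously: the reverse direction uses $\I(1,0)=F(I_1(1,0),\ldots,I_n(1,0))=F(0,\ldots,0)=0$ directly from \Ithree; the forward direction starts from $\I(x,y)=0$, uses the no-zero-multipliers hypothesis on $F$ to force every $I_j(c_{1,j}(x),c_{2,j}(y))=0$, and then \LF for each $I_j$ gives $c_{1,j}(x)=1$ and $c_{2,j}(y)=0$, from which the bridge observation returns $x=1$ and $y=0$.

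There is no serious obstacle and the proof is routine. The only subtle point is the bridge observation itself: it is precisely the assumption that each $c_{i,j}$ maps $(0,1)$ into $(0,1)$ that prevents a strict interior point of $[0,1]$ from being sent to a boundary value, and this is what guarantees that the boundary-value information on the $I_j$'s can be cleanly propagated back to $x$ and $y$ once the no-multipliers hypothesis on $F$ reduces an aggregated identity to a pointwise one.
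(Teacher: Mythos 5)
Your proof is correct and follows essentially the same route as the paper's, which presents the \LT case as a single chain of equivalences (using the no-unit-multipliers hypothesis and the interiority of the $c_{i,j}$ implicitly) and leaves the \LF case as analogous. Your version merely makes explicit the ``bridge'' step $c_{1,j}(x)=0 \Leftrightarrow x=0$ and $c_{2,j}(y)=1 \Leftrightarrow y=1$ and writes out both directions and both properties, which is a sound elaboration of the same argument.
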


\begin{proof}
\begin{eqnarray*}
1 = \I(x,y) & \Leftrightarrow & 1 = F(I_1(c_{1,1}(x),c_{2,1}(y)), \dots, I_n(c_{1,n}(x),c_{2,n}(y)))\\
& \Leftrightarrow & 
1 = I_j(c_{1,j}(x),c_{2,j}(y)) \text{ for all } j \in \{1,\dots,n\} \\
& \Leftrightarrow & c_{1,j}(x)=0 \text{ or } c_{2,j}(y)=1 \text{ for all } i \in \{1,\dots,n\} \\
& \Leftrightarrow & x=0 \text{ or } y=1.
\end{eqnarray*}
\end{proof}

Finally, we consider the invariance with respect to powers of continuous t-norms, a property which has been marked as valuable for approximate reasoning \cite{Massanet2017}.

\begin{proposition} Let $n \in \mathbb{N}$, $F \in \mathcal{A}_n$, $\bm{c}_i : [0,1] \to [0,1]^n$ with $i \in \{1,2\}$ be two increasing mappings such that $\bm{c}_i(0) =(0,\dots,0)$, $\bm{c}_i(1)=(1,\dots,1)$ and $I_1,\ldots,I_n\in \mathcal{I}$ and $T$ a continuous t-norm. If $c_{i,j}(x_{T}^{(r)}) = c_{i,j}(x)_T^{(r)}$ for all $i \in \{1,2\}$, $j \in \{1,\dots,n\}$, $r>0$ and $x \in (0,1)$ such that $x_{T}^{(n)} \not = 0$ and $I_1, \dots, I_n$ satisfy \PIT, then \I satisfies \PIT.
\end{proposition}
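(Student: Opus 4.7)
The plan is to verify \PIT for \I by a direct chain of equalities that pulls each $T$-power out of the transformations $c_{i,j}$ via the commutation hypothesis, applies \PIT termwise to each $I_j$ in the collection, and then reassembles through $F$. I would fix arbitrary $x,y \in (0,1)$ and $r > 0$ with $x_T^{(r)} \neq 0$ and $y_T^{(r)} \neq 0$, and aim to show $\Is(x_T^{(r)}, y_T^{(r)}) = \Is(x,y)$.

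Expanding the left-hand side by Eq. (\ref{eq:fchain_construction:short}) yields
\[
\Is(x_T^{(r)}, y_T^{(r)}) = F\left(\bigoplus_{j=1}^n I_j\bigl(c_{1,j}(x_T^{(r)}),\, c_{2,j}(y_T^{(r)})\bigr)\right).
\]
Applying the hypothesis $c_{i,j}(t_T^{(r)}) = c_{i,j}(t)_T^{(r)}$ to both arguments inside each $I_j$ rewrites this expression as $F\left(\bigoplus_{j=1}^n I_j(c_{1,j}(x)_T^{(r)}, c_{2,j}(y)_T^{(r)})\right)$. Using \PIT for each $I_j$ then collapses every inner term back to $I_j(c_{1,j}(x), c_{2,j}(y))$, and reassembly through $F$ gives $\Is(x,y)$, finishing the proof.

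The main obstacle is checking that the side conditions for applying \PIT to each $I_j$ are genuinely satisfied: we need $c_{1,j}(x), c_{2,j}(y) \in (0,1)$ together with $c_{1,j}(x)_T^{(r)} \neq 0$ and $c_{2,j}(y)_T^{(r)} \neq 0$. The latter two reduce, via the commutation hypothesis, to $c_{1,j}(x_T^{(r)}) \neq 0$ and $c_{2,j}(y_T^{(r)}) \neq 0$, which hold whenever the transformations $c_{i,j}$ send nonzero inputs to nonzero outputs, a mild regularity assumption in the spirit of Proposition~\ref{prop:lowest_truth_falsity}. In the degenerate situation where $c_{1,j}(x)=0$ or $c_{2,j}(y)=1$ for some $j$, the commutation condition itself forces $c_{1,j}(x_T^{(r)}) = c_{1,j}(x)_T^{(r)} = 0$ or $c_{2,j}(y_T^{(r)}) = c_{2,j}(y)_T^{(r)} = 1$ respectively, so by \Ithree the corresponding term $I_j(\cdot,\cdot)$ equals $1$ on both sides of the substitution and the termwise equality remains intact, allowing the overall argument to go through uniformly.
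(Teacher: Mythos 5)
Your argument is essentially the paper's own proof: expand $\Is$ at $\left(x_T^{(r)},y_T^{(r)}\right)$, use the commutation hypothesis to pull the $T$-power outside each $c_{i,j}$, apply \PIT termwise to each $I_j$, and reassemble through $F$. Your third paragraph actually goes beyond the paper, which applies \PIT termwise without verifying that $c_{1,j}(x),c_{2,j}(y)\in(0,1)$ and that the corresponding powers are nonzero; you correctly dispose of the cases $c_{1,j}(x)=0$ and $c_{2,j}(y)=1$ via \Ithree, but the symmetric cases $c_{1,j}(x)=1$ and $c_{2,j}(y)=0$ are the genuinely problematic ones (for instance, under \NP one has $I_j(1,a)=a\neq a_T^{(r)}$ in general), so the extra regularity you flag --- that each $c_{i,j}$ maps $(0,1)$ into $(0,1)$, as in Proposition~\ref{prop:lowest_truth_falsity} --- is indeed needed for a fully rigorous argument and is silently omitted both in your degenerate-case analysis and in the paper's proof.
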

\begin{proof}
\begin{eqnarray*}
\I\left(x_T^{(r)}, y_T^{(r)}\right) & = & F \left(\bigoplus_{i=1}^n I_i (c_{1,i}(x_T^{(r)}),c_{2,i}(y_T^{(r)}))\right) \\
& = & F \left(\bigoplus_{i=1}^n I_i (c_{1,i}(x)_T^{(r)},c_{2,i}(y)_T^{(r)})\right)
\\
& = & F \left(\bigoplus_{i=1}^n I_i (c_{1,i}(x),c_{2,i}(y))\right) \\
& = & F(x,y).
\end{eqnarray*}
\end{proof}

\begin{example} Let us consider $T$ and $F$ as the product t-norms and $c_{i,j}(x)=x^{i+j}$ for all $x \in [0,1]$. Then, for any collection of fuzzy implication functions $I_1,\dots,I_n \in \mathcal{I}$ that satisfy \PIT we have that the resulting implication $\I$ also satisfies \PIT. Indeed, in this case
$$c_{i,j}(x_T^{(r)}) = c_{i,j}(x^r) = x^{r \cdot (i+j)} = (x^{i+j})^r = c_{i,j}(x)^r = c_{i,j}(x)_T^{(r)}.$$
\end{example}

To conclude this section, we emphasize that many other relevant properties, such as the exchange principle, the law of importation, $T$-conditionality, or the iterative Boolean law, have not been addressed here. These properties are considerably more challenging to analyze for analogous construction methods, including aggregation, horizontal/vertical constructions, or ordinal sums. Indeed, such properties often require dedicated studies \cite{Cao2022,Su2016,Yi2024} and/or involve complex conditions which are not easy to verify \cite{Reiser2013,Su2015b}. For these reasons, we leave their investigation as future work.

\section{Generalized $F$-chain-based construction method as generalization of other construction methods}\label{section:construction_methods}

In Section \ref{section:properties} we have introduced a new construction method for fuzzy implication function based on the rescaling of each variable and the posterior aggregation. Although this method is already a new contribution to the literature, the main interest of the method lies in the fact that it can encompass different construction methods in a single compact expression. To demonstrate this fact, in this section we consider a wide variety of construction methods and show that by selecting an appropriate aggregation function and input transformation defined by $\bm{c}_1$, $\bm{c}_2$, the method can be expressed as the generalized $F$-chain method.

In this paper, we focus on construction methods that involve at least two different fuzzy implication functions since it is the scenario in which the generalization of the $F$-chain-based construction is more meaningful. Also, due to the wide variety of construction methods considered we do not recall each definition here. To see an overview of several construction methods and consult their expressions, we refer the reader to \cite[Table 14]{Fernandez-Peralta2025}.

\subsection{Aggregation of fuzzy implication functions}

First of all, we consider the aggregation of fuzzy implication functions which was introduced in \cite{Reiser2013} which encompasses the popular classical methods of minimum, maximum and convex combination. In this framework, it becomes evident that the generalized $F$-chain-based construction method with the identity as input transformation is equivalent to these methods, selecting the appropriate aggregation function in each case.

\begin{proposition}
Let $n \in \mathbb{N}$ and let $\{I_i\}_{i=1}^n$ be a collection of fuzzy implication functions.

\begin{itemize}
\item If $F(x_1,\dots,x_n) = \max \{x_1,\dots,x_n\}$ for all $x_i \in [0,1]$, $i \in \{1,\dots,n\}$ and $c_{1,i}(x)=c_{2,i}(x)=x$ for all $x \in [0,1]$, $i \in \{1,\dots,n\}$ then
$$\Is(x,y) = \max\{I_1(x,y),\dots,I_n(x,y)\} = (I_1 \vee\ldots \vee I_n)(x,y), \quad \text{for all } x,y \in [0,1].$$
\item If $F(x_1,\dots,x_n) = \min \{x_1,\dots,x_n\}$ for all $x_i \in [0,1]$, $i \in \{1,\dots,n\}$ $c_{1,i}(x)=c_{2,i}(x)=x$ for all $x \in [0,1]$, $i \in \{1,\dots,n\}$ then
$$\Is(x,y) = \min\{I_1(x,y),\dots,I_n(x,y)\} = (I_1 \wedge\ldots \wedge I_n)(x,y), \quad \text{for all } x,y \in [0,1].$$
\item Let $\{\lambda_i\}_{i=1}^n$ be a sequence of values in $[0,1]$ such that $\displaystyle \sum_{i=1}^n \lambda_i =1$. If $F(x_1,\dots,x_n) = \sum_{i=1}^n \lambda_ix_i$ for all $x_i \in [0,1]$ $c_{1,i}(x)=c_{2,i}(x)=x$ for all $x \in [0,1]$, $i \in \{1,\dots,n\}$ then
$$\Is(x,y) = \sum_{i=1}^n \lambda_i I_i(x,y) = I_{I_1,\dots,I_n}^{\lambda_1,\dots,\lambda_n}(x,y), \quad \text{for all } x,y \in [0,1].$$
\item If $c_{1,i}(x)=c_{2,i}(x)=x$ for all $x \in [0,1]$, $i \in \{1,\dots,n\}$ then
$$\Is(x,y) = F\{I_1(x,y),\dots,I_n(x,y)\} = (I_1 ,\ldots ,I_n)_F(x,y), \quad \text{for all } x,y \in [0,1].$$
\end{itemize}
\end{proposition}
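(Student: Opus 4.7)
The plan is to prove all four items by direct substitution into Eq. (\ref{eq:fchain_construction}), exploiting the fact that the hypothesis $c_{1,i}(x)=c_{2,i}(x)=x$ reduces every input transformation to the identity. Since the most general case is the fourth bullet and the first three are obtained from it by specializing $F$, I would begin there and derive the remaining items as immediate corollaries.

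First, I would verify that the identity map on $[0,1]$ is an increasing mapping satisfying $c_{1,i}(0)=c_{2,i}(0)=0$ and $c_{1,i}(1)=c_{2,i}(1)=1$, so that the hypotheses of Theorem \ref{th:def_method} apply and \Is is a well-defined fuzzy implication function. Substituting directly into Eq. (\ref{eq:fchain_construction}) then yields
\begin{equation*}
\Is(x,y) = F\bigl(I_1(x,y), \ldots, I_n(x,y)\bigr),
\end{equation*}
which is exactly the $F$-aggregation $(I_1,\ldots,I_n)_F$ from \cite{Reiser2013}. This establishes the fourth item.

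Next, I would specialize the master identity to the three named aggregation functions. For the maximum and the minimum this is literal; for the weighted arithmetic mean one additionally observes that $F(x_1,\ldots,x_n) = \sum_{i=1}^n \lambda_i x_i$ is an $n$-ary aggregation function whenever $\lambda_i \in [0,1]$ with $\sum_{i=1}^n \lambda_i = 1$, as recalled in the preliminaries. Evaluating Eq. (\ref{eq:fchain_construction}) at each of these three choices of $F$ then reproduces the classical maximum, minimum and convex-combination constructions, yielding the first three items.

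I do not anticipate any real obstacle: the whole argument is a sequence of substitutions into Eq. (\ref{eq:fchain_construction}), and the only non-trivial point is the bookkeeping of recognizing that the resulting right-hand sides coincide with the construction names used in the literature. Checking that each of $\max$, $\min$, and the weighted arithmetic mean lies in $\mathcal{A}_n$ is the most content-bearing step, and it is already taken care of in the preliminaries.
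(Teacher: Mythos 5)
Your proposal is correct and matches the paper's treatment: the paper states this proposition without any proof, regarding it as an immediate consequence of substituting the identity transformations into the defining equation of \Is, which is exactly the direct-substitution argument you give. Your extra care in checking that the identity map satisfies the hypotheses of Theorem \ref{th:def_method} and that $\max$, $\min$, and the weighted arithmetic mean belong to $\mathcal{A}_n$ is sound but adds nothing beyond what the paper already takes for granted in its preliminaries.
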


Interestingly, thanks to this assignment we can use the results in Section \ref{section:properties} to directly disclose sufficient conditions for the considered additional properties to the construction methods based on the aggregation of fuzzy implication functions. Indeed, in Table \ref{table:agg_properties} there is a summary of all the properties considered in Section \ref{section:properties}. These results align with existing studies of these methods, and even extend them for those properties that have not been previously considered.


\begin{table}[ht]
\centering
\resizebox{\textwidth}{!}{%
\begin{tabular}{|c|c|c|c|c|c|}
\hhline{|~|-|-|-|-|-|}
\multicolumn{1}{c|}{} &
\textbf{Generalized $F$-chain} & \textbf{Max} & \textbf{Min} & \textbf{Convex} & \textbf{Aggregation} \\ \hline

\multicolumn{1}{|c|}{$F$} &
$F$ & $\max$ & $\min$ & $\displaystyle \sum_{i=1}^n \lambda_i x_i$ & $F$ \\ \hline

\multicolumn{1}{|c|}{$\bm{c}_1$, $\bm{c}_2$} &
$\bm{c}_1$, $\bm{c}_2$ & $c_{i,j}(x)=x$ & $c_{i,j}(x)=x$ & $c_{i,j}(x)=x$ & $c_{i,j}(x)=x$ \\ \hline

\multicolumn{1}{|c|}{\NP} &
$F(\bm{c}_2(t))=t$ & \cmark & \cmark & \cmark & $F(x,\dots,x)=x$ \\ \hline

\multicolumn{1}{|c|}{\CB} &
$F(\bm{c}_2(t))=t$ & \cmark & \cmark & \cmark & $F(x,\dots,x)=x$ \\ \hline

\multicolumn{1}{|c|}{\IP} &
$c_{1,j}(x) \leq c_{2,j}(x)$ & \cmark & \cmark & \cmark & \cmark \\ \hline

\multicolumn{1}{|c|}{\CPN} &
$\bm{c}_1 = \bm{c}_2 = \bm{c}$, $c_j(N(x)) = N(c_j(x))$ & \cmark & \cmark & \cmark & \cmark \\ \hline

\multicolumn{1}{|c|}{\LCPN} &
$\bm{c}_1 = \bm{c}_2 = \bm{c}$, $c_j(N(x)) = N(c_j(x))$ & \cmark & \cmark & \cmark & \cmark \\ \hline

\multicolumn{1}{|c|}{\RCPN} &
$\bm{c}_1 = \bm{c}_2 = \bm{c}$, $c_j(N(x)) = N(c_j(x))$ & \cmark & \cmark & \cmark & \cmark \\ \hline

\multicolumn{1}{|c|}{\OP} &
\shortstack{$c_{1,j}(x) \leq c_{2,j}(x)$ \\ $F$ no unit mult.} & ? & \cmark & \cmark & $F$ no unit mult \\ \hline

\multicolumn{1}{|c|}{\LT} &
\shortstack{$c_{i,j}(x) \in (0,1)$, $x \in (0,1)$ \\ $F$ no unit mult.} & ? & \cmark & ? & $F$ no unit mult. \\ \hline

\multicolumn{1}{|c|}{\LF} &
\shortstack{$c_{i,j}(x) \in (0,1)$, $x \in (0,1)$ \\ $F$ no zero mult.} & \cmark & ? & ? & $F$ no zero mult. \\ \hline

\multicolumn{1}{|c|}{\PIT} &
$c_{i,j}(x_T^{(n)}) = c_{i,j}(x)_T^{(n)}$ & \cmark & \cmark & \cmark & \cmark \\ \hline
\end{tabular}%
}
\caption{In each cell, we can find the sufficient conditions of the additional property of the corresponding row and the construction method of the corresponding column. If a check mark \cmark is used, then the property is always satisfied. If an interrogation symbol (?) is used then the sufficient conditions do not apply to this case.}
\label{table:agg_properties}
\end{table}

\subsection{Contrapositivisations}

In this section, we consider contrapositivisation methods which originated under the motivation of modifying a fuzzy implication function which may not satisfy \CPN in order to satisfy this property. Although these methods only use one fuzzy implication function, they apply transformations of this fuzzy implication function using a fuzzy negation. These transformations are, in turn,  also fuzzy implication functions. Thus, the most well-known contrapositivisation methods can also be rewritten using the generalized $F$-chain-based construction method.


\begin{proposition} Let $N$ be a fuzzy negation. Then,
\begin{enumerate}[(i)]
\item If $n=2$, $F = \min$, $I_1=I$, $I_2=I_N$ and $c_{1,i}(x)=c_{2,i}(x)=x$ for all $x \in [0,1]$, $i \in \{1,\dots,n\}$ then
$$
\Is(x,y) = \min \{I(x,y),I(N(y),N(x))\} = I_N^U(x,y).
$$
\item If $n=2$, $F = \max$, $I_1=I$, $I_2=I_N$ and $c_{1,i}(x)=c_{2,i}(x)=x$ for all $x \in [0,1]$, $i \in \{1,\dots,n\}$ then
$$
\Is(x,y) = \max \{I(x,y),I(N(y),N(x))\} = I_N^L(x,y).
$$
\item If $n=2$, $F = \min$, $I_1(x,y)=\max\{I(x,y),N(x)\}$, $I_2(x,y)=\max\{I(N(y),N(x)),y\}$ for all $x,y \in [0,1]$ and $c_{1,i}(x)=c_{2,i}(x)=x$ for all $x \in [0,1]$, $i \in \{1,\dots,n\}$ then
$$
\Is(x,y) = \min \{ \max\{I(x,y),N(x)\},\max\{I(N(y),N(x)),y\}\} = I_N^M(x,y).
$$
\end{enumerate}
\end{proposition}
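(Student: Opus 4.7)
The plan is direct substitution in each of the three cases. Since all input transformations $c_{1,i}, c_{2,i}$ are chosen to be the identity on $[0,1]$, the general formula \eqref{eq:fchain_construction} of Theorem~\ref{th:def_method} collapses to
$$
\Is(x,y) = F(I_1(x,y), I_2(x,y)),
$$
so each item reduces to an algebraic identity depending on the specific choice of $F$, $I_1$, and $I_2$.

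For items (i) and (ii), I would first check that the $N$-reciprocal $I_N(x,y) := I(N(y), N(x))$ is itself a fuzzy implication function. This is a routine verification of Definition~\ref{def:implication}: \Ione\ and \Itwo\ follow from the antitonicity of $N$ combined with the monotonicity of $I$ in each argument, while \Ithree\ uses $N(0)=1$, $N(1)=0$ together with the boundary values of $I$. Once $I_N \in \mathcal{I}$ is established, setting $F=\min$ (resp.\ $F=\max$), $I_1=I$, $I_2=I_N$ in the displayed formula above immediately yields $\min\{I(x,y), I(N(y),N(x))\} = I_N^U(x,y)$ (resp.\ $\max\{I(x,y), I(N(y),N(x))\} = I_N^L(x,y)$) by the very definition of these contrapositivisations.

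For item (iii), the only nontrivial step is verifying that the proposed $I_1(x,y) = \max\{I(x,y), N(x)\}$ and $I_2(x,y) = \max\{I(N(y),N(x)), y\}$ lie in $\mathcal{I}$, so that Theorem~\ref{th:def_method} is applicable. Monotonicity of $I_1$ is clear because both $I(\cdot, y)$ and $N$ are decreasing in $x$, while $I(x,\cdot)$ is increasing in $y$ and $N(x)$ does not depend on $y$; similarly for $I_2$, using that $y \mapsto I(N(y),N(x))$ is increasing and $x \mapsto I(N(y),N(x))$ is decreasing, together with the trivially monotone projection $(x,y) \mapsto y$. The boundary values at $(0,0)$, $(1,1)$ and $(1,0)$ are checked by direct computation using $N(0)=1$, $N(1)=0$, and \Ithree\ for $I$. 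With $I_1, I_2 \in \mathcal{I}$ in hand, the choice $F=\min$ gives exactly $I_N^M(x,y)$.

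The only real (and mild) obstacle I anticipate is the \Ione--\Ithree\ check for $I_1$ and $I_2$ in case (iii); all remaining content of the proposition is a purely mechanical unfolding of definitions once Theorem~\ref{th:def_method} is invoked.
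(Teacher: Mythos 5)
Your proposal is correct and follows essentially the same route as the paper: identity transformations collapse the construction to $F(I_1(x,y),I_2(x,y))$, points (i) and (ii) reduce to the known fact that the $N$-reciprocation $I_N$ is a fuzzy implication function (which the paper simply cites from Baczy\'nski--Jayaram rather than re-verifying), and point (iii) requires only the direct check that $I_1$ and $I_2$ satisfy \Ione--\Ithree, exactly as in the paper's proof.
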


\begin{proof}
Since the $N$-reciprocation of a fuzzy implication function given by $I_N(x,y)=I(N(y),N(x))$ is a fuzzy implication function (see \cite[Definition 1.6.1]{Baczynski2008}), Points (i) and (ii) follow directly from Definition \ref{def:fchain_construction}. For Point (iii) we just need to prove that $I_1$ and $I_2$ are fuzzy implication functions. Indeed, let us prove this fact only for $I_1$ since for $I_2$ is analogous:
\begin{itemize}
\item $I_1(0,0) = \max\{I(0,0),N(0)\} = \max\{1,1\} =1$.
\item $I_1(1,1) = \max\{I(1,1),N(1)\} = \max\{1,0\} =1$.
\item $I_1(1,0) = \max\{I(1,0),N(1)\} = \max\{0,0\} =0$.
\item Let $x_1,x_2,y \in [0,1]$ with $x_1 \leq x_2$. Then, $I(x_1,y) \geq I(x_2,y)$ and $N(x_1) \geq N(x_2)$, so we obtain
$$
I_1(x_1,y) = \max\{I(x_1,y),N(x_1)\} \geq \max\{I(x_2,y),N(x_2)\} = I_1(x_2,y).
$$
\item Let $x,y_1,y_2 \in [0,1]$ with $y_1 \leq y_2$. Then $I(x,y_1) \leq I(x,y_2)$ so we obtain
$$
I_1(x,y_1) = \max \{ I(x,y_1), N(x) \} \leq \max \{ I(x,y_2), N(x) \} = I_1(x,y_2).
$$
\end{itemize}
\end{proof}

In this case, the preservation of additional properties directly depends on $I_N$, $I_1$ and $I_2$. We provide here such study only for the $N$-reciprocation.

\begin{lemma}\label{lem:nreciprocication}
Let $N$ be a fuzzy negation and $I$ a fuzzy implication function. If we consider the $N$-reciprocation $I_N(x,y)=I(N(y),N(x))$, then the following statements hold:
\begin{enumerate}[(i)]
\item If $N_I \circ N = \text{id}_{[0,1]}$ then $I_N$ satisfies \NP.
\item $I_N$ does not generally satisfy \CB.
\item If $I$ satisfies \OP (resp. \IP), then $I_N$ satisfies also \IP (resp. \OP).
\item If $I$ satisfies \CPN (resp. \LCPN or \RCPN), then $I_N$ satisfies \CPN (resp. \LCPN or \RCPN).
\item If $I$ satisfies \LF (resp. \LT) and $N(x) \in (0,1)$ for all $x \in (0,1)$, then $I_N$ satisfies \LF (resp. \LT).
\item Let $T$ be a continuous t-norm. If $I$ satisfies \PIT and $N(x_T^{(r)}) = N(x)_T^{(r)}$ for $r>0$, $x\in (0,1)$ such that $x_T^{(r)} \not = 0$, then $I_N$ satisfies \PIT.
\end{enumerate}
\end{lemma}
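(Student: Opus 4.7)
The plan is to treat each of the six items by directly unfolding $I_N(x,y)=I(N(y),N(x))$ together with the relevant hypothesis on $I$ or $N$; the proofs are short computations, and the main work is bookkeeping.

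For (i), I compute $I_N(1,y)=I(N(y),N(1))=I(N(y),0)=N_I(N(y))$ and invoke $N_I\circ N=\text{id}_{[0,1]}$. For (ii), I would exhibit a counterexample: the Gödel implication \IGD satisfies \CB, but its $N_c$-reciprocation with $N_c(x)=1-x$ is $(\IGD)_{N_c}(x,y)=1$ when $x\le y$ and $1-x$ otherwise; at $(x,y)=(0.8,0.5)$ the value $0.2$ drops below $y=0.5$, so \CB fails. For (iii), the computation $I_N(x,x)=I(N(x),N(x))=1$ yields \IP of $I_N$ from \IP of $I$ (and a fortiori from \OP, which implies \IP); for the \OP-to-\OP step I chain $I_N(x,y)=1 \Leftrightarrow I(N(y),N(x))=1 \Leftrightarrow N(y)\le N(x) \Leftrightarrow x\le y$, where the middle equivalence uses \OP of $I$ and the last uses strict decreasingness of $N$.

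For (iv), \CPN is immediate: applying \CPN of $I$ at $(a,b)=(N(y),N(x))$ yields $I(N(y),N(x))=I(N(N(x)),N(N(y)))$, i.e.\ $I_N(x,y)=I_N(N(y),N(x))$. The \LCPN and \RCPN cases are more delicate: unfolding reduces the \LCPN claim to $I(N(y),N(N(x)))=I(N(x),N(N(y)))$, which does not follow from \LCPN of $I$ in general, but follows as soon as $N$ is strong, since then $N\circ N=\text{id}$ collapses the identity to $I(N(y),x)=I(N(x),y)$, precisely \LCPN of $I$. For (v), I unfold $I_N(x,y)=0\Leftrightarrow I(N(y),N(x))=0\Leftrightarrow N(y)=1\wedge N(x)=0$, and then use $N(t)\in(0,1)$ for $t\in(0,1)$ together with $N(0)=1$ and $N(1)=0$ to conclude $y=0$ and $x=1$; the \LT case is strictly analogous, with $1$ in place of $0$ and \LT of $I$ in place of \LF. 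For (vi), I chain $I_N(x_T^{(r)},y_T^{(r)})=I(N(y_T^{(r)}),N(x_T^{(r)}))=I(N(y)_T^{(r)},N(x)_T^{(r)})=I(N(y),N(x))=I_N(x,y)$, invoking the commutation hypothesis on $N$ and then \PIT of $I$.

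The main obstacles I expect are structural rather than technical. First, the \OP direction in (iii) genuinely requires $N$ to be strictly decreasing, which should be added to the hypotheses. Second, \LCPN and \RCPN preservation in (iv) seems to require $N$ to be strong (involutive), since otherwise triple compositions $N\circ N\circ N$ arise after unfolding and do not collapse; this condition should either be stated explicitly or (iv) should be restricted accordingly. A minor additional point concerns (vi): applying \PIT of $I$ at $(N(y),N(x))$ requires $N(y)_T^{(r)}\ne 0$ and $N(x)_T^{(r)}\ne 0$, so the admissible domain in the \PIT of $I_N$ should be read with this caveat in mind.
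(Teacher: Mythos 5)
Your proof is correct and follows the same direct-unfolding strategy as the paper's, but the two diverge in a few instructive places. For (i) and (iii) the paper simply cites Baczy\'nski--Jayaram (Propositions 1.6.4 and 1.6.6), whereas you give the computations explicitly; your observation that the \OP-to-\OP step requires $N$ to be strictly decreasing is correct (a negation with a flat segment gives $N(y)=N(x)$ for some $x>y$, whence $I_N(x,y)=1$ by reflexivity of $I$), and that hypothesis is indeed absent from the lemma as stated. For (ii) you use the G\"odel implication \IGD with the classical negation while the paper uses Reichenbach with $N(x)=1-x^2$; both counterexamples are valid. The substantive difference is in (iv): you keep the implications in the uncrossed form (\LCPN of $I$ gives \LCPN of $I_N$) at the price of assuming $N$ strong, and you are right that without some such assumption the triple compositions $N\circ N\circ N$ do not collapse. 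The paper's own computation, however, takes a route that needs no extra hypothesis on $N$: the middle equality in each of its chains is an instance of the \emph{other} contraposition of $I$, so what is actually proved there is that \LCPN of $I$ yields \RCPN of $I_N$ and that \RCPN of $I$ yields \LCPN of $I_N$. That crossed version holds unconditionally but does not literally match the statement either; your version and the paper's are two different repairs of the same defect, and it is worth knowing that the cross-implications are free. Your domain caveat in (vi), that applying \PIT of $I$ at $(N(y),N(x))$ presupposes $N(x)_T^{(r)},N(y)_T^{(r)}\neq 0$, is likewise a fair point that the paper passes over silently.
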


\begin{proof}
\begin{enumerate}[(i)]
\item \cite[Proposition 1.6.4]{Baczynski2008}.
\item Let $N(x)=1-x^2$ and $I=\IRC$ the Reichenbach implication, then
$$I_N(x,y) = \IRC(1-x^2,1-y^2) = 1-x^2+x^2y^2,$$
and $I_N(0.8,0.8) = 0.7696 < 0.8.$
\item \cite[Proposition 1.6.6]{Baczynski2008}.
\item 
$$ I_N(x,y) = I(N(y),N(x)) = I(N(N(x)),N(N(y))) = I_N(N(y),N(x)),$$
$$ I_N(x,N(y)) = I(N(N(y)),N(x)) = I(N(N(x)),N(y)) = I_N(y,N(x)),$$
$$ I_N(N(x),y) = I(N(y),N(N(x))) = I(N(x),N(N(y)) = I_N(N(y),x).$$
\item 

$$ I_N(x,y) = 0 \Leftrightarrow I(N(y),N(x)) = 0 \Leftrightarrow N(y)=1 \wedge N(x) = 0 \Leftrightarrow y=0 \wedge x=1,$$
$$ I_N(x,y) = 1 \Leftrightarrow I(N(y),N(x)) = 1 \Leftrightarrow N(y) = 0 \vee N(x)=1 \Leftrightarrow y=1 \vee x=0.$$
\item $$ I_N(x_T^{(r)},y_T^{(r)}) = I(N(y_T^{(r)}),N(x_T^{(r)})) = I(N(y)_T^{(r)},N(x)_T^{(r)})=I(N(y),N(x)) = I_N(x,y).$$ 
\end{enumerate}
\end{proof}

Consequently, thanks to the results in Section \ref{section:properties} and Lemma \ref{lem:nreciprocication} we can directly retrieve sufficient conditions for the preservation of the additional properties of the low and upper contrapositivisations. We summarize the results in Table \ref{table:contrapositivisations}.

\begin{table}[h]
\centering
\resizebox{\textwidth}{!}{%
\setlength{\tabcolsep}{4pt}
\renewcommand{\arraystretch}{1.5}
\begin{tabular}{|c|c|c|c|c|c|c|c|c|c|c|}
\hhline{|~|-|-|-|-|-|-|-|-|-|-|}
\multicolumn{1}{c|}{} & \NP & \CB & \IP & \CPN & \LCPN & \RCPN & \OP & \LT & \LF & \PIT \\ \hline
\multicolumn{1}{|c|}{$I_N^U$} &
$N_I \circ N = \text{id}_{[0,1]}$ & ? & \cmark & \cmark & \cmark & \cmark & \cmark & \cmark & ? &
$N(x_T^{(r)}) = N(x)_T^{(r)}$ \\ \hline
\multicolumn{1}{|c|}{$I_N^L$} &
$N_I \circ N = \text{id}_{[0,1]}$ & ? & \cmark & \cmark & \cmark & \cmark & ? & ? & \cmark &
$N(x_T^{(r)}) = N(x)_T^{(r)}$ \\ \hline
\end{tabular}%
}
\caption{In each cell, we can find the sufficient conditions of the additional property of the corresponding column and the construction method of the corresponding row. If a check mark \cmark is used, then the property is always satisfied. If an interrogation symbol (?) is used then the sufficient conditions do not apply to this case.}
\label{table:contrapositivisations}
\end{table}

\subsection{Threshold methods}\label{subsection:threshold_methods}

Now, we consider the vertical and horizontal threshold methods presented in \cite{Massanet2012A,Massanet2013} as constructions that consist on an adequate scaling of the second or first variable of the two initial fuzzy implication functions. Thanks to their definition, the resulting operator has interesting properties like a controlled increasingness of the second or first variable, respectively. Also, the horizontal threshold method plays a crucial role in the characterization of $h$ and $(h,e)$-implications \cite{Fernandez-Peralta2022B,Massanet2012A}. Further, the horizontal threshold method was generalized in \cite{Yi2017} to consider a numerable collection of fuzzy implication functions. For the vertical threshold method, as far as we know, it does not exist such extension. In this section, we prove that the generalized $F$-chain-based construction method also generalizes these methods (in the finite case) by considering the weighted mean and an adequate scaling of the first or second variable.

Differently from the other methods, in this case we have to consider a preliminary step to the fuzzy implication functions considered in the generalized $F$-chain-based construction method in order to match the horizontal/vertical threshold methods. It is well-known that for any fuzzy implication function $I$ the following values on the boundary $I(x,1)$, $I(0,x)$ are fixed to 1 but the values $I(x,0)$, $I(1,x)$ depend on each operator. Further, if we respect the monotonicity and boundary conditions, we can replace those values with others. In accordance, in the next proposition we define the transformation which assigns $I(x,0)$, $I(1,x)$ values to zero except for the fixed value $I(1,1)$. Further, we prove that it preserves some of the properties considered in Section \ref{section:properties}.

\begin{proposition}\label{prop:zero_transformation} 
Let $I:[0,1]^2 \to [0,1]$ be a fuzzy implication function and let us define next binary functions
 $$   \IZD{I}(x,y) =
 \left\{ \begin{array}{ll}
    0 &   \text{if } x>0 \text{ and }y=0, \\
    I(x,y) &   \text{otherwise},
    \end{array}
    \right.
    \quad
    \IZU{I}(x,y) =
 \left\{ \begin{array}{ll}
    0 &   \text{if } x=1 \text{ and } y<1, \\
    I(x,y) &   \text{otherwise},
    \end{array}
    \right.
$$
$$
    \IZZ{I}(x,y) =
 \left\{ \begin{array}{ll}
    0 &   \text{if } (x=1 \text{ and } y<1) \text{ or } (x>0 \text{ and } y=0), \\ 
    I(x,y) &   \text{otherwise}.
    \end{array}
    \right.
$$
Then \IZD{\IZU{I}}=\IZU{\IZD{I}}=\IZZ{I} and \IZD{I}, \IZU{I} and \IZZ{I} are fuzzy implication functions. Further, the following statements hold:
\begin{enumerate}[(i)]
\item If $I$ satisfies \IP then $\IZD{I}$, $\IZU{I}$ and $\IZZ{I}$ satisfy \IP.
\item If $I$ satisfies \OP then $\IZD{I}$, $\IZU{I}$ and $\IZZ{I}$ satisfy \OP.
\item If $I$ satisfies \NP then \IZD{I} satisfies \NP too but \IZU{I} and \IZZ{I} do not satisfy \NP.
\item If $I$ satisfies \CB then $\IZD{I}$ satisfies \CB too but \IZU{I} and \IZZ{I} do not satisfy \CB.
\item If $I$ satisfies \LT then $\IZD{I}$, $\IZU{I}$ and $\IZZ{I}$ satisfy \LT.
\item \IZD{I}, \IZU{I} and \IZZ{I} do not satisfy \LF for any fuzzy implication function $I$.
\item If $I$ satisfies \CPN, \IZD{I}, \IZU{I} and \IZZ{I} do not necessarily satisfy \CPN.
\item If $I$ satisfies \LCPN, \IZD{I}, \IZU{I} and \IZZ{I} do not necessarily satisfy \LCPN.
\item If $I$ satisfies \RCPN, \IZD{I}, \IZU{I} and \IZZ{I} do not necessarily satisfy \RCPN.
\item If $I$ satisfies \PIT then $\IZD{I}$, $\IZU{I}$ and $\IZZ{I}$ satisfy \PIT.
\end{enumerate}
\end{proposition}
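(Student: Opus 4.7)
The plan is to proceed in four stages. First I would establish the identity $\IZD{\IZU{I}}=\IZU{\IZD{I}}=\IZZ{I}$ by a direct case split on the four regions determined by ``$x=1$ or $x<1$'' and ``$y=0$ or $y>0$''; in each region the outer transformation either inherits the zero already produced by the inner one or leaves the value untouched, matching the defining clauses of $\IZZ{I}$. Next I would check that each operator is a fuzzy implication function. Condition \Ithree is immediate because $(0,0)$ and $(1,1)$ fall in the ``otherwise'' branch while $I(1,0)=0$ agrees with the modified value. For \Ione and \Itwo, a short case split on whether the two arguments of the inequality lie in the modified region suffices: inside that region the transformed value is $0$, so the inequality is trivial, and outside it reduces to the monotonicity of $I$; the only delicate moment is when a monotone move crosses the region boundary, but there the transformed value drops to (or from) $0$ while the opposite side is bounded in $[0,1]$, which closes the gap.

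For the preservation items (i)--(v) and (x), the guiding observation is that \IP, \OP, \LT and \PIT are tested only on subdomains where the modifications are inactive. For \IP, the diagonal point $(x,x)$ never satisfies $x>0\wedge x=0$ nor $x=1\wedge x<1$, so all three transformations coincide with $I$ on the diagonal. For \OP, the transformation can only produce $0$ via its modification clause, hence any transformed value equal to $1$ must come from the ``otherwise'' branch; then \OP for $I$ yields $x\leq y$, and the converse is immediate because $x\leq y$ excludes both modification clauses. For \NP and \CB on $\IZD{I}$, the only active points are of the form $(x,0)$, where \NP already forces $I(1,0)=0$ and \CB gives $I(x,0)\geq 0$, so the modification does not alter the value on the tested points. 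For \LT, any transformed value set to $0$ is certainly not $1$, so $J(x,y)=1$ with $J\in\{\IZD{I},\IZU{I},\IZZ{I}\}$ forces $I(x,y)=1$ and \LT on $I$ yields $x=0$ or $y=1$; the converse is immediate. For \PIT, the hypotheses $x,y\in(0,1)$ together with $x_T^{(r)},y_T^{(r)}\neq 0$ place all four evaluated arguments strictly in $(0,1)$, hence in the ``otherwise'' branch, so the identity transfers from $I$.

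The negative items (iii)--(iv) for $\IZU{I}$ and $\IZZ{I}$, (vi), and (vii)--(ix) I would dispatch by explicit counterexamples. For \NP and \CB on $\IZU{I}$ (and $\IZZ{I}$), the equality $\IZU{I}(1,y)=0$ for any $y\in(0,1)$ simultaneously contradicts $\IZU{I}(1,y)=y$ and $\IZU{I}(1,y)\geq y$. For \LF, the identities $\IZD{I}(x,0)=0$ for every $x\in(0,1)$ and $\IZU{I}(1,y)=0$ for every $y\in(0,1)$ produce zeros outside the locus $\{(1,0)\}$, contradicting the forward direction of \LF. For the contrapositions I would take $I=\IKD$ and $N=N_c$, a pair that satisfies \CPN, \LCPN and \RCPN simultaneously; direct evaluation gives $\IZD{\IKD}(0.5,0)=0$ and $\IZD{\IKD}(1,0.5)=\max(0,0.5)=0.5$, and comparing the two sides of \CPN, \LCPN and \RCPN at $(x,y)=(0.5,0)$, $(0.5,0)$ and $(0.5,1)$ respectively yields $0\neq 0.5$ in each case. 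Analogous evaluations centred on $(1,0.5)$ handle $\IZU{\IKD}$, and $\IZZ{\IKD}$ inherits either counterexample.

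The main obstacle I anticipate is bookkeeping the monotonicity case analysis at the corners where $x$ crosses $1$ or $y$ crosses $0$, since that is precisely where the branch of each transformation changes. Compiling a small table mapping each admissible monotone move to its pair of ``before/after'' branches eliminates any risk of a missed case. Everything else is a mechanical translation between the modification clauses and the hypotheses of each additional property, so the only genuine risk lies in that monotonicity bookkeeping.
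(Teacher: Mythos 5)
Your overall strategy coincides with the paper's: direct case analysis for the identity, the implication axioms and items (i)--(vi), (x), and explicit counterexamples for the negative items. Most of it is sound, but there is one concrete step that fails.

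The gap is in the \IZZ{I} part of items (vii)--(ix). You claim that ``$\IZZ{\IKD}$ inherits either counterexample'', but for \IZZ{\IKD} \emph{both} witness points are zeroed out: $\IZZ{\IKD}(0.5,0)=0$ and $\IZZ{\IKD}(1,0.5)=0$, so the comparison reads $0=0$ and nothing is contradicted. The failure is not just a poor choice of points: the modification region of \IZZ{I}, namely $R=\{(x,y):(x=1\wedge y<1)\vee(x>0\wedge y=0)\}$, is invariant under the contraposition map $(x,y)\mapsto(N(y),N(x))$ whenever $N$ is a strong negation (since then $N(y)=1\Leftrightarrow y=0$, $N(x)=0\Leftrightarrow x=1$, $N(x)<1\Leftrightarrow x>0$, $N(y)>0\Leftrightarrow y<1$). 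Hence if $I$ satisfies \CPN (resp.\ \LCPN, \RCPN) with respect to a strong $N$, then \IZZ{I} satisfies it as well: on $R$ both sides of the identity equal $0$, and off $R$ they reduce to the corresponding identity for $I$. In particular $\IZZ{\IKD}$ \emph{does} satisfy \CPN, \LCPN and \RCPN with respect to $N_c(x)=1-x$, so no rearrangement of evaluation points within your example can rescue it; any counterexample for \IZZ{I} must use a non-strong negation (or the claim must be restricted to \IZD{I} and \IZU{I}, which is all the paper's own proof explicitly treats -- its generic witnesses, e.g.\ $\IZD{I}(1,N(x))=I(1,N(x))\neq 0$, likewise collapse to $0$ under \IZZ{}). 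Your counterexamples for \IZD{\IKD} and \IZU{\IKD}, and everything else in the proposal, check out.
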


\begin{proof} ~
 \begin{itemize} 
 \item[(i)-(ii)] Straightforward, since the structure of the fuzzy implication function inside the square $(0,1)^2$ remains invariant.
 \item[(iii)] Let $I$ satisfy \NP, then
 $$\IZD{I}(1,y) = 
  \left\{ \begin{array}{ll}
    0 &   \text{if } y=0, \\
    I(1,y) &   \text{otherwise},
    \end{array} \right. = \left\{ \begin{array}{ll}
    0 &   \text{if } y=0, \\
    y &   \text{otherwise},
    \end{array} \right. =y.$$
However, \IZU{I} and \IZZ{I} do not satisfy \NP because $\IZU{I}(1,y)=\IZZ{I}(1,y)=0$ for all $y \in (0,1)$.
\item[(iv)] Let $I$ satisfy \CB, then
$$\IZD{I}(x,y)  =
\left\{ \begin{array}{ll}
    0 &   \text{if } x=1 \text{ and } y<1, \\
    I(x,y) &   \text{otherwise},
    \end{array}
    \right. \geq y.
$$
However, \IZU{I} and \IZZ{I} do not satisfy \CB because $\IZU{I}(1,y)=\IZZ{I}(1,y)=0 < y$ for all $y \in (0,1]$.
\item[(v)-(vi)] Straightforward.
\item[(vii)] Let $y=0$, $x \in (0,1)$ with $N(x) \not = 0$ and $I$ a fuzzy implication function such that $I(1,N(x)) \not = 0$ then
$$\IZD{I}(x,0)=0, \quad \IZD{I}(N(0),N(x)) = \IZD{I}(1,N(x)) = I(1,N(x)) \not = 0.$$
Let $x=1$, $y \in (0,1)$ with $N(y) \not = 1$ and $I$ a fuzzy implication function such that $I(N(y),0) \not = 0$ then
$$
\IZU{I}(1,y) = 0, \quad \IZU{I}(N(y),N(1)) = \IZU{I}(N(y),0) = I(N(y),0) \not = 0.
$$
\item[(viii)-(ix)] The proof is analogous to the one in Point (vii).
\item[(x)] Straightforward since in the property \PIT only points of $(0,1)^2$ are evaluated.
 \end{itemize}
\end{proof}

Having said this, we can prove that the horizontal/vertical threshold methods can be rewritten in terms of the generalized $F$-chain-based construction method. Let us consider the two methods separately.

\subsubsection{Horizontal threshold method}\label{subsubsection:horizontal_threshold}

In this case, we consider an increasing sequence in $(0,1)$ as the thresholds, the weighted mean with weights the difference between these thresholds and the input transformation of linear rescaling each interval between thresholds to $[0,1]$. As commented before, the transformation in Proposition \ref{prop:zero_transformation} is needed in order to ensure the equivalence between the two methods in this case. 

\begin{proposition}\label{prop:hor_method} 
Let $n \in \mathbb{N}$, $I_1,\ldots,I_n\in \mathcal{I}$ and $\{e_i\}^n_{i=0}$ an increasing sequence in $(0,1)$ with $e_0=0$ and $e_{n}=1$. Let us consider the weighted mean with weights $e_{i}-e_{i-1}$ for all $i \in \{1,\dots,n\}$, i.e.,
\begin{equation}
F(x_1,\dots,x_n) = \sum_{i=1}^n (e_i-e_{i-1})x_i,
\end{equation}
and the following functions
\begin{equation}\label{eq:ci}
c_{i}(x) = 
\left\{ \begin{array}{ll}
    0 &   \text{if } x \in [0,e_{i-1}], \\
    \frac{x-e_{i-1}}{e_{i}-e_{i-1}} &   \text{if } x \in (e_{i-1},e_{i}], \\
    1 & \text{otherwise},
    \end{array} \right.
\end{equation}
for all $i \in \{1,\dots,n\}$. In this case, the following equality holds:

\begin{align}
& F \left( I_1(x,c_1(y)),\bigoplus_{i=2}^n \IZD{I_i} (x,c_i(y))\right) 
 \nonumber\\
 &=     \left\{ \begin{array}{ll}
    1 & \text{if } x=0 \text{ or } y=1,\\
    e_1I_1(x,0) & \text{if } x>0 \text{ and } y=0,\\
    e_{j-1} + (e_{j}-e_{j-1})I_{j}\left(x,\frac{y-e_{j-1}}{e_{j}-e_{j-1}}\right) &   \text{if } x > 0 \text{ and } e_{j-1} < y \leq e_{j}. \\
    \end{array}
    \right. \label{eq:threshold1}
\end{align}
\end{proposition}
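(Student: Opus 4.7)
The argument is a direct case analysis on the location of $y$ relative to the thresholds $\{e_i\}_{i=0}^n$, together with the boundary cases $x = 0$, $y = 0$, and $y = 1$. The key observation is that the piecewise-linear functions $c_i$ in \eqref{eq:ci} are designed so that, for $y \in (e_{j-1}, e_j]$, one has $c_i(y) = 1$ when $i < j$, $c_i(y) = (y - e_{j-1})/(e_j - e_{j-1})$ when $i = j$, and $c_i(y) = 0$ when $i > j$; moreover, the transformation $\IZD{(\cdot)}$ from Proposition~\ref{prop:zero_transformation} forces $\IZD{I_i}(x, 0) = 0$ precisely when $x > 0$, which is exactly what is needed to collapse all contributions indexed by $i > j$.

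First I would dispose of the trivial boundary cases. When $x = 0$, each $I_i(0, c_i(y)) = 1$ by \Ithree, and the cut-off in $\IZD{(\cdot)}$ never triggers since it requires $x > 0$; thus the weighted sum telescopes to $\sum_{i=1}^n (e_i - e_{i-1}) = e_n - e_0 = 1$. When $y = 1$, each $c_i(1) = 1$, so every slot equals $I_i(x,1) = 1$ and the total is again $1$. For $x > 0$ and $y = 0$, one has $c_i(0) = 0$ for all $i$; the first slot contributes $(e_1 - e_0) I_1(x, 0) = e_1 I_1(x, 0)$, while for every $i \geq 2$ the term $\IZD{I_i}(x, 0) = 0$ by construction, matching the second branch of \eqref{eq:threshold1}.

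The main case is $x > 0$ with $e_{j-1} < y \leq e_j$. Substituting the three ranges of $c_i(y)$, the contributions split as follows: for $i < j$, $c_i(y) = 1$ produces a term $(e_i - e_{i-1}) \cdot 1$; for $i = j$, since $c_j(y) \in (0,1]$ the cut-off in $\IZD{(\cdot)}$ is not activated, so the term equals $(e_j - e_{j-1}) I_j(x, c_j(y))$; and for $i > j$, $c_i(y) = 0$ combined with $x > 0$ drives the term to zero. The prefix $\sum_{i=1}^{j-1}(e_i - e_{i-1})$ telescopes to $e_{j-1}$, so adding the $i=j$ term yields exactly the third branch of \eqref{eq:threshold1}.

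The main subtlety---and essentially the only place where care is required---is the asymmetric treatment of the first slot: the formula keeps $I_1$ unmodified while every $I_i$ with $i \geq 2$ is replaced by $\IZD{I_i}$. This asymmetry is precisely what allows $I_1(x, 0)$ to survive in the $y = 0$ case (so that the answer is $e_1 I_1(x,0)$ rather than $0$), while remaining harmless in the case $j = 1$ of the main split, where $c_1(y) > 0$ implies that $\IZD{I_1}(x, c_1(y))$ would have coincided with $I_1(x, c_1(y))$ anyway. Verifying this small bookkeeping is the only non-mechanical step of the proof.
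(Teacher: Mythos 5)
Your proof is correct and follows essentially the same route as the paper: a direct case analysis on the boundary cases and on the interval $(e_{j-1},e_j]$ containing $y$, using that $c_i(y)=1$ for $i<j$, $c_i(y)=0$ for $i>j$, and that $\IZD{I_i}(x,0)=0$ for $x>0$. The only cosmetic difference is that the paper splits the main case into the subcases $e_{j-1}<y<e_j$, $y=e_j$, and $e_{n-1}<y\leq 1$, whereas you absorb them into one computation by observing $c_j(y)\in(0,1]$; both are valid.
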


\begin{proof} 
\begin{itemize} 
\item If $x=0$ or $y=1$ then 
$$F \left( I_1(x,c_1(y)),\bigoplus_{i=2}^n \IZD{I_i} (x,c_i(y))\right)=1,$$
because we know by Theorem \ref{th:def_method} that \Is is a fuzzy implication function.
\item If $x>0$ and $y=0$ then
\begin{eqnarray*}
F \left( I_1(x,c_1(y)),\bigoplus_{i=2}^n \IZD{I_i} (x,c_i(y))\right) &=& e_1I_1(x,c_1(0)) + \sum_{i=2}^{n}(e_i-e_{i-1})\IZD{I_i} (x,c_i(0))\\
&=& e_1I_1(x,0) + \sum_{i=2}^{n}(e_i-e_{i-1})\IZD{I_i} (x,0) \\
&=& e_1I_1(x,0).
\end{eqnarray*}
\item If $x>0$ and $e_{j-1} < y < e_j$ with $j \in \{1,\dots,n-1\}$ then
\begin{eqnarray*}
F \left( I_1(x,c_1(y)),\bigoplus_{i=2}^n \IZD{I_i} (x,c_i(y))\right) &=& \sum_{i=1}^{j-1} (e_i-e_{i-1}) I_i (x,1) + (e_j-e_{j-1}) I_j \left(x,\frac{y-e_{j-1}}{e_j-e_{j-1}}\right)\\
& &+ \sum_{i=j+1}^{n} (e_i-e_{i-1})\IZD{I_i} (x,0) \\
&=& \sum_{i=1}^{j-1} (e_i-e_{i-1}) + (e_j-e_{j-1}) I_j \left(x,\frac{y-e_{j-1}}{e_j-e_{j-1}}\right) \\
&=& e_{j-1} + (e_j-e_{j-1}) I_j \left(x,\frac{y-e_{j-1}}{e_j-e_{j-1}}\right).
\end{eqnarray*}
\item If $x>0$ and $y= e_{j}$ with $j \in \{1,\dots,n-1\}$ then
\begin{eqnarray*}
F \left( I_1(x,c_1(y)),\bigoplus_{i=2}^n \IZD{I_i} (x,c_i(y))\right) &=& \sum_{i=1}^{j} (e_i-e_{i-1})I_i(x,1) + \sum_{i=j+1}^{n} (e_i-e_{i-1})\IZD{I_i} (x,0) \\
&=& \sum_{i=1}^{j} (e_i-e_{i-1}) = e_j.
\end{eqnarray*}
\item If $x>0$ and $e_{n-1} < y \leq 1$ then
\begin{eqnarray*}
F \left( I_1(x,c_1(y)),\bigoplus_{i=2}^n \IZD{I_i} (x,c_i(y))\right) &=& \sum_{i=1}^{n-1} (e_i-e_{i-1}) I_i (x,1) + (1-e_{n-1}) I_n \left(x,\frac{y-e_{n-1}}{1-e_{n-1}}\right)\\
&=& e_{n-1} + (1-e_{n-1}) I_n \left(x,\frac{y-e_{n-1}}{1-e_{n-1}}\right).
\end{eqnarray*}
\end{itemize}
\end{proof}

\begin{figure}
\end{figure}

 \begin{figure}[t]
\centering
\includegraphics[scale=0.1]{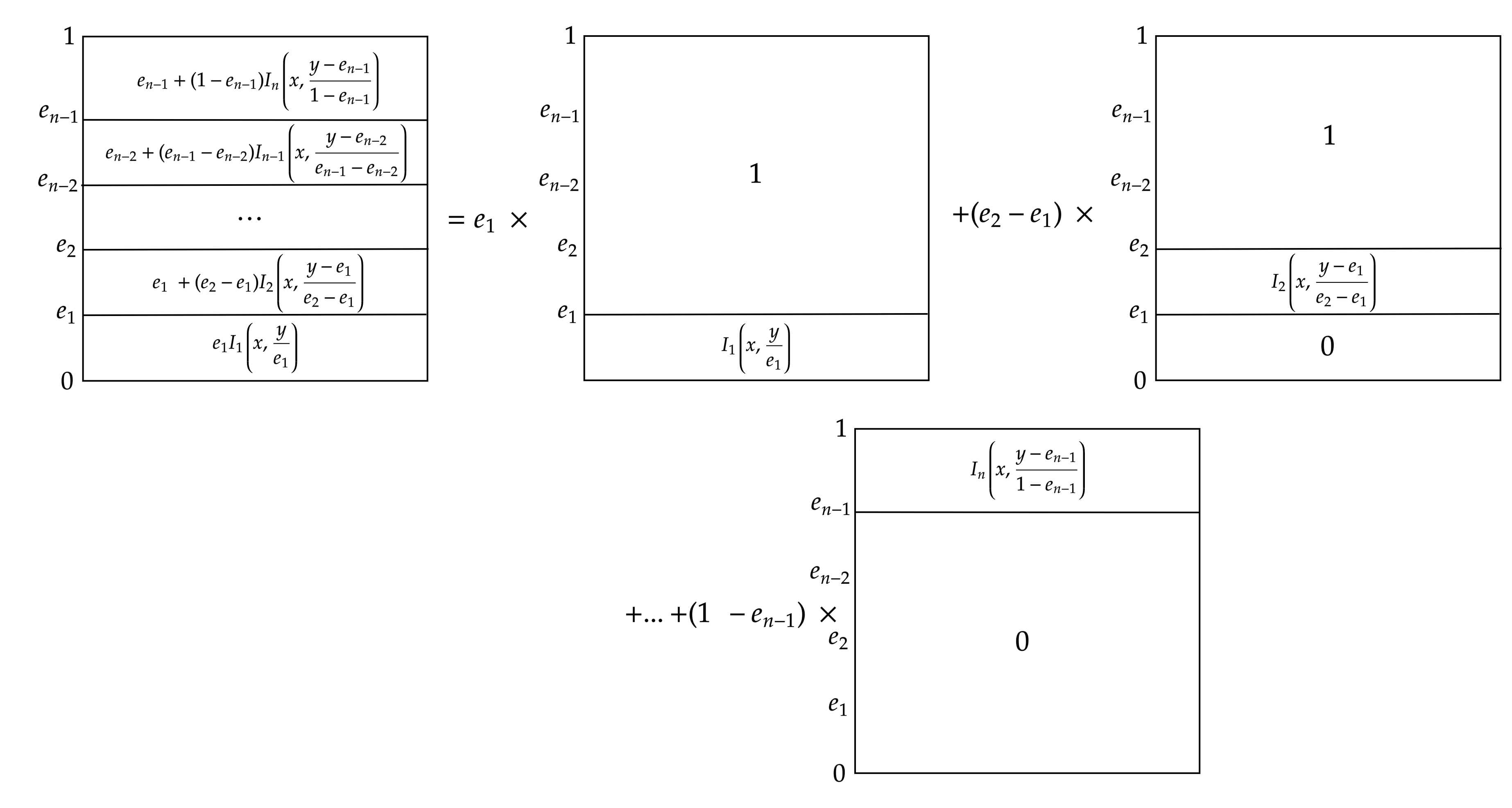}
\caption{Schematic representation of the generalized horizontal threshold method obtained through the generalized $F$-chain-based construction method.}\label{fig:horizontal_threshold}
 \end{figure}

In Figure \ref{fig:horizontal_threshold} there is the an sketch of how the generalized $F$-chain based construction is used to obtain the generalized horizontal threshold method. Notice that for obtaining a result such as Proposition \ref{prop:hor_method} it was necessary that our construction method includes different input transformation for each variable. Indeed, in the horizontal threshold construction method the first variable of the fuzzy implication functions remains invariant.

Now, we compare again the results obtain in Section \ref{section:properties} with the capability of the horizontal threshold method to conserve the additional properties of the input fuzzy implication functions.  To do so, we need to check if the used aggregation function $F$ and input transformation $\bm{c}$ satisfy some of the sufficient conditions.

First, we point out that $\bm{c}$ is an $F$-chain. Indeed, let $t \in [e_{j-1},e_j]$, then 
 \begin{eqnarray*}
F(\bm{c}(t)) &=&  \sum_{i=1}^{j-1} (e_i-e_{i-1}) \cdot 1 + (e_j-e_{j-1}) \cdot \frac{(t-e_{j-1})}{(e_j-e_{j-1})} + \sum_{i=j+1}^{n} (e_i-e_{i-1}) \cdot 0 \\ &=& e_{j-1} + t - e_{j-1} = t.
\end{eqnarray*}
Thus, according to Proposition \ref{prop:consequent_boundary} the horizontal threshold method preserves \NP and \CB properties. For the rest of results of Section \ref{section:properties} the conditions are not fulfilled. However, this is coherent with the fact that the horizontal threshold construction method does not satisfy properties like \OP or \IP. Let us highlight that to study properties like the contrapositivisations in these kind of constructions is a complex problem (see \cite[Proposition 23]{Massanet2012A}), so it was to be expected that our general results cannot be applied for all properties on specific methods.

\subsubsection{Vertical threshold method}

 Now, we consider the vertical threshold method. In this case, there exists only the definition using two fuzzy implication functions (see \cite[Definition 4]{Massanet2013}). However, thanks to our new technique we can straightforwardly generalize this construction method for a collection of fuzzy implication functions. Nonetheless, differently from the horizontal threshold method we need to use a decreasing sequence instead of an increasing one due to the decreasingness with respect to the first variable inherent of fuzzy implication functions.

\begin{proposition}\label{prop:ver_method} 
Let $\{e_i\}^n_{i=0}$ be an increasing sequence in $(0,1)$ with $e_0=0$ and $e_{n}=1$ and $\{\theta_i\}_{i=0}^n$ a decreasing sequence in $(0,1)$ with $\theta_0=1$ and $\theta_n=0$. Let us consider $F$ as the weighted mean with weights $\theta_{i-1}-\theta_{i}$ for all $i \in \{1,\dots,n\}$ and $c_i$ as in Eq. (\ref{eq:ci}) for all $i \in \{1,\dots,n\}$. In this case, the following equality holds:
 \begin{align}\label{eq:threshold2}
 &F \left(\bigoplus_{i=1}^{n-1} \IZU{I_i} (c_i(x),y),I_n(c_n(x),y)\right) \nonumber\\
 &=     \left\{ \begin{array}{ll}
    1 & \text{if } x=0 \text{ or } y=1,\\
    \theta_{n-1} I_n(1,y) & \text{if } x=1 \text{ and } y < 1, \\
    \theta_{j} + (\theta_{j-1}-\theta_j)I_j \left(\frac{x-e_{j-1}}{e_j - e_{j-1}},y\right)&   \text{if } e_{j-1} \leq  x < e_{j} \text{ and } y<1.
    \end{array} \right.
\end{align}
\end{proposition}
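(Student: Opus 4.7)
The plan is to mimic the case analysis used in the proof of Proposition \ref{prop:hor_method}, but now applied to the first variable through $c_i(x)$ and using the upper-zero transformation $\IZU{\cdot}$ instead of the lower-zero one. The boundary case ($x=0$ or $y=1$) is free: by Theorem \ref{th:def_method}, the expression on the left is a fuzzy implication function, so its value equals $1$ there. For all remaining cases I exploit the key observation that, for $e_{j-1}\leq x<e_j$, the definition of $c_i$ in Eq. (\ref{eq:ci}) forces $c_i(x)=1$ when $i<j$ (since $x\geq e_{j-1}\geq e_i$), $c_j(x)=\tfrac{x-e_{j-1}}{e_j-e_{j-1}}\in[0,1)$, and $c_i(x)=0$ when $i>j$ (since $x<e_j\leq e_{i-1}$).

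For the case $x=1$, $y<1$, I first note that $c_i(1)=1$ for every $i$. For $i\in\{1,\dots,n-1\}$ the term $\IZU{I_i}(1,y)$ equals $0$ by the very definition of $\IZU{I_i}$, while the last term is $I_n(1,y)$. Plugging these into the weighted mean collapses the sum to $(\theta_{n-1}-\theta_n)I_n(1,y)=\theta_{n-1}I_n(1,y)$, as claimed. This piece is immediate once the vanishing behaviour of $\IZU{I_i}$ on the line $\{1\}\times[0,1)$ is invoked.

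For the main case $e_{j-1}\leq x<e_j$ with $y<1$ and $j<n$, I apply the trichotomy above. The $i<j$ terms satisfy $\IZU{I_i}(c_i(x),y)=\IZU{I_i}(1,y)=0$. The $i=j$ term equals $\IZU{I_j}(c_j(x),y)=I_j(c_j(x),y)$ because $c_j(x)<1$ removes the exceptional branch of $\IZU{\cdot}$. For $i>j$ with $i<n$ we get $\IZU{I_i}(0,y)=I_i(0,y)=1$, and the last term is $I_n(0,y)=1$ since $c_n(x)=0$ as well. Putting this together, the weighted mean becomes
\[
(\theta_{j-1}-\theta_j)I_j\!\left(\tfrac{x-e_{j-1}}{e_j-e_{j-1}},y\right)+\sum_{i=j+1}^{n}(\theta_{i-1}-\theta_i),
\]
and the right-hand sum telescopes to $\theta_j-\theta_n=\theta_j$, yielding the stated formula. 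The sub-case $j=n$ with $x<1$ is handled identically and reduces to the $x=1$ formula in the limit.

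The routine obstacles are purely bookkeeping: separating the index $i=n$ (which uses the untransformed $I_n$) from $i<n$ (which uses $\IZU{I_i}$), and verifying that the boundary indices $j=1$ and $j=n$ are consistent with the piecewise formula. The only genuinely delicate point is why the exceptional sub-case $x=1$ is listed separately: when $j=n$ the generic formula would give $\theta_n+(\theta_{n-1}-\theta_n)I_n(c_n(x),y)$, and I must check this agrees with $\theta_{n-1}I_n(1,y)$ at $x=1$ — which it does, since $\theta_n=0$ and $c_n(1)=1$. Once this consistency is recorded, the proof is a straightforward verification.
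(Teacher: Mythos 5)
Your proof is correct and takes essentially the same route as the paper's: the same trichotomy of $c_i(x)$ values on $[e_{j-1},e_j)$, the vanishing of \IZU{I_i}$(1,y)$ for $y<1$, the fact that the terms with $c_i(x)=0$ contribute $1$, and the telescoping identity $\sum_{i=j+1}^{n}(\theta_{i-1}-\theta_i)=\theta_j$. The only differences are cosmetic regroupings of boundary points (the paper treats $x=e_j$ and $e_{n-1}<x\le 1$ as separate cases, while you fold them into the half-open intervals of the statement and isolate $x=1$), and your explicit consistency check at $j=n$ is a useful, if routine, addition.
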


\begin{proof}
\begin{itemize}
\item If $x=0$ or $y=1$ then
$$F \left(\bigoplus_{i=1}^{n-1} \IZU{I_i} (c_i(x),y),I_n(c_n(x),y)\right)=1,$$
because we know by Theorem \ref{th:def_method} that \Is is a fuzzy implication function.
\item If $e_{j-1} < x < e_j$ and $y<1$ with $j \in \{1,\dots,n-1\}$ then
\begin{eqnarray*}
F \left(\bigoplus_{i=1}^{n-1} \IZU{I_i} (c_i(x),y),I_n(c_n(x),y)\right) &=& \sum_{i=1}^{j-1} (\theta_{i-1}-\theta_{i}) \IZU{I_i} (1,y) + (\theta_{j-1}-\theta_j) I_j \left(\frac{x-e_{j-1}}{e_j-e_{j-1}},y\right)\\
& &+ \sum_{i=j+1}^{n} (\theta_{i-1}-\theta_i)I_i (0,y) \\
&=& \sum_{i=j+1}^{n} (\theta_{i-1}-\theta_i) + (\theta_{j-1}-\theta_j) I_j \left(\frac{x-e_{j-1}}{e_j-e_{j-1}},y\right) \\
&=& \theta_j + (\theta_{j-1}-\theta_j) I_j \left(\frac{x-e_{j-1}}{e_j-e_{j-1}},y\right).
\end{eqnarray*}
\item If $x= e_j$ and $y<1$ with $j \in \{1,\dots,n-1\}$ then
\begin{eqnarray*}
F \left(\bigoplus_{i=1}^{n-1} \IZU{I_i} (c_i(x),y),I_n(c_n(x),y)\right) &=& \sum_{i=1}^{j} (\theta_{i-1}-\theta_i) \IZU{I_i} (1,y) +\sum_{i=j+1}^{n} (\theta_{i-1}-\theta_i)I_i (0,x) \\
&=& \sum_{i=j+1}^{n} (\theta_{i-1}-\theta_i) = \theta_j.
\end{eqnarray*}
\item If $e_{n-1} < x \leq 1$ and $y<1$ then
\begin{eqnarray*}
F \left(\bigoplus_{i=1}^{n-1} \IZU{I_i} (c_i(x),y),I_n(c_n(x),y)\right) &=& \sum_{i=1}^{n-1} (\theta_{i-1}-\theta_i) \IZU{I_i} (1,y) + (\theta_{n-1}-0) I_n \left(\frac{x-e_{n-1}}{1-e_{n-1}},y\right)\\
&=& \theta_{n-1} I_n \left(\frac{x-e_{n-1}}{1-e_{n-1}},y\right).
\end{eqnarray*}
\end{itemize}
\end{proof}

Similarly to the horizontal threshold construction method, the resulting fuzzy implication function has a controlled increasingness with respect to the second variable fixed by the sequence of thresholds.

\begin{corollary} Let $I$ be the fuzzy implication function in Eq. (\ref{eq:threshold2}), then
$$ I(e_i,y)=\theta_i, \quad \text{for all } i \in \{1,\dots,n-1\}.$$
\end{corollary}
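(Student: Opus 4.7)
The plan is to fix $i \in \{1, \dots, n-1\}$, substitute $x = e_i$ into the piecewise expression of Proposition \ref{prop:ver_method}, and identify which branch applies. Since the sequence $\{e_i\}_{i=0}^n$ is strictly increasing with $e_0 = 0$ and $e_n = 1$, the value $x = e_i$ lies strictly inside $(0,1)$ and satisfies $e_{j-1} \leq x < e_j$ precisely for $j = i+1$. Hence, assuming $y < 1$, the relevant branch of Eq.~(\ref{eq:threshold2}) yields
\begin{equation*}
I(e_i, y) = \theta_{i+1} + (\theta_i - \theta_{i+1}) \, I_{i+1}\!\left(\frac{e_i - e_i}{e_{i+1} - e_i},\, y\right) = \theta_{i+1} + (\theta_i - \theta_{i+1}) \, I_{i+1}(0, y).
\end{equation*}

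The key step is then to invoke the boundary behavior inherited from the defining conditions \Ione--\Ithree of a fuzzy implication function (see the remark after Definition \ref{def:implication}), which forces $I_{i+1}(0, y) = 1$ for every $y \in [0,1]$. Plugging this in collapses the expression to
\begin{equation*}
I(e_i, y) = \theta_{i+1} + (\theta_i - \theta_{i+1}) = \theta_i,
\end{equation*}
as required. The case $y = 1$ falls under the first branch of Eq.~(\ref{eq:threshold2}) and gives $I(e_i, 1) = 1$; this is consistent with interpreting the corollary as a statement about $y \in [0,1)$, which is the range where the threshold behavior is meaningful and matches the controlled increasingness described in the surrounding text.

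I do not anticipate a serious obstacle here: the verification is a direct substitution once the correct piece of the piecewise definition is selected. The only subtlety worth spelling out is the choice $j = i+1$ (rather than $j = i$), which is dictated by the half-open inequality $e_{j-1} \leq x < e_j$ in the third branch; after that, everything reduces to the universal boundary identity $I_{i+1}(0, y) = 1$ and a one-line arithmetic cancellation.
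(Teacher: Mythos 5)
Your proposal is correct and amounts to the same direct verification the paper relies on: the corollary is immediate from Proposition \ref{prop:ver_method}, whose proof already treats the case $x=e_j$, $y<1$ explicitly and obtains $\theta_j$, while you equivalently substitute $x=e_i$ into the third branch of Eq.~(\ref{eq:threshold2}) with $j=i+1$ and use $I_{i+1}(0,y)=1$. Your remark that the identity only holds for $y<1$ (since $I(e_i,1)=1$) is a valid and worthwhile precision that the corollary's statement leaves implicit.
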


Differently from the horizontal threshold construction method, the condition of $F$-chain also depends on the decreasing sequence $\{\theta_i\}_{i=0}^n$. However, we may select $\theta_i = 1-e_i$ for all $i \in \{1,\dots,n\}$ and we have
$$ F(x_1,\dots,x_n) = \sum_{i=1}^n (\theta_{i-1}-\theta_i)x_i = \sum_{i=1}^n (\theta_{i-1}-\theta_i)x_i = \sum_{i=1}^n (e_i - e_{i-1})x_i.$$
Thus, $\bm{c}$ is an $F$-chain as we proved in Section \ref{subsubsection:horizontal_threshold} and the  fuzzy implication function in Eq. (\ref{eq:threshold2}) satisfies \NP and \CB.


\subsection{Example of an ordinal sum}\label{subsection:ordinal_sum}

Another widely studied construction method is the ordinal sum of fuzzy implication functions, which generates a new operator that behaves as distinct fuzzy implication functions over a collection of disjoint subsquares. Unlike the case for continuous t-norms and t-conorms, however, there are multiple competing proposals for such ordinal sums \cite{Zhou2021B}. In this section, we demonstrate that the generalized $F$-chain-based construction method can also produce ordinal sums. Nevertheless, a comprehensive exploration of the relationship between this method and existing ordinal sum approaches remains an open question, which we leave as future work.


\begin{example}\label{example:ordinal_sum} 
Let $\{e_i\}^n_{i=0}$ be an increasing sequence in $(0,1)$ with $e_0=0$ and $e_{n}=1$. Let us consider $F$ as the weighted mean with weights $e_{i}-e_{i-1}$ for all $i \in \{1,\dots,n\}$ and $c_i$ as in Eq. (\ref{eq:ci}) for all $i \in \{1,\dots,n\}$. In this case, the following equality holds:

\begin{align}\label{eq:ordinal_sum}
&(I_1,\dots,I_n)_{F,\bm{c},\bm{c}}  \nonumber \\
&
 =     \left\{ \begin{array}{ll}
    1 & \text{if }x=0 \text{ or } y=1,\\
    1 &\text{if }x \in (e_{i-1},e_i], y \in [e_{j-1},e_j), i<j, \\
    1+e_{i-1}-e_i+(e_i-e_{i-1}) I_i \left(\frac{x-e_{i-1}}{e_i-e_{i-1}}, \frac{y-e_{i-1}}{e_i-e_{i-1}}\right) &\text{if }x \in (e_{i-1},e_i], y \in [e_{j-1},e_j), i=j, \\
    1+e_{j-1}-e_i+(e_j-e_{j-1})I_j \left(1,\frac{y-e_{j-1}}{e_j-e_{j-1}}\right) &\\
    + (e_i-e_{i-1})I_i\left(\frac{x-e_{i-1}}{e_i-e_{i-1}},0\right) &\text{if }x \in (e_{i-1},e_i], y \in [e_{j-1},e_j), i>j. \\
    \end{array}
    \right.
\end{align}
The proof is similar as the one in Propositions \ref{prop:hor_method} and \ref{prop:ver_method}. As an illustrative example, if we select $n=3$, $\{e_0,e_1,e_2,e_3\} = \{0,0.5,0.75,1\}$ and $I_1,I_2,I_3$ as in Example \ref{example:negation} (i.e.,  \L ukasiewicz, Reichenbach and Kleene-Dienes) in Eq. (\ref{eq:ordinal_sum}) we obtain the fuzzy implication function that can be visualized in Figure \ref{figure:ordinal_sum}.
\begin{figure}[h]
\centering
\includegraphics[scale=0.3]{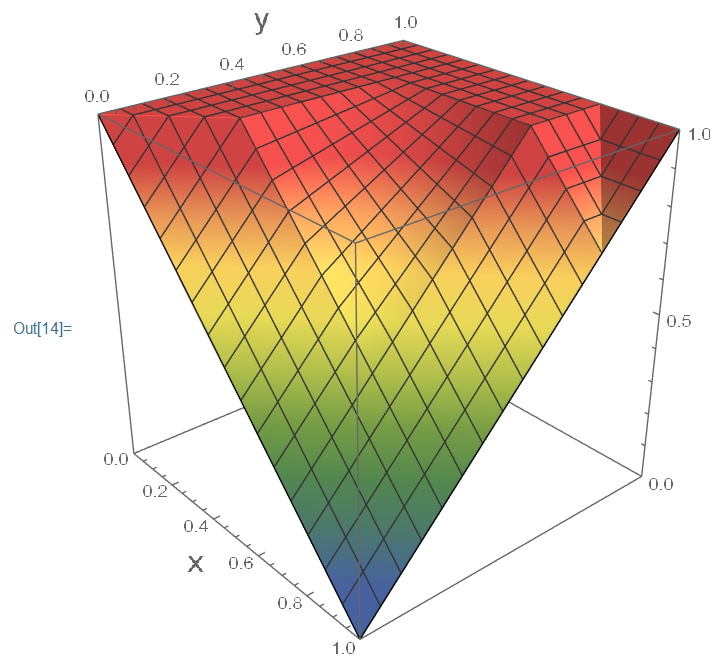}
\caption{Plot of the fuzzy implication function in Example \ref{example:ordinal_sum}.}\label{figure:ordinal_sum}
\end{figure}
\end{example}

\section{Conclusions}\label{section:conclusions}

In this paper, we introduced the generalized $F$-chain-based construction method, an extension of the original framework proposed by Mesiar et al. Our key contributions are threefold:

\begin{itemize}
\item \textit{Generalization of the $F$-chain based construction:} We proved that the original method can be extended to arbitrary collections of fuzzy implication functions and input transformations, rather than being restricted to $F$-chains.
\item \textit{Property preservation analysis}: We conducted a thorough investigation of how the construction preserves several essential properties. We provided illustrative examples of how our construction method can construct fuzzy implication functions that fulfill any of the considered properties.
\item \textit{Unification of existing methods}: We have shown that the generalized $F$-chain based construction serves as a unifying framework for several existing construction methods in the literature. By reformulating diverse approaches using specific aggregation functions and input transformations, we revealed novel connections between seemingly unrelated construction methods, offering a broader theoretical perspective.
\end{itemize}

The reported results show that the generalized $F$-chain-based construction method is sufficiently general to encompass existing methods while still allowing us to ensure the preservation of different additional properties. This may simplify further studies on fuzzy implication functions construction methods. Indeed, we believe that our compact description has multiple benefits, for instance an easier and compact algorithmic implementation of different families or the automatic verification of additional properties.

As future work, we highlight the search of necessity conditions for the considered properties, the study of other properties like the exchange principle, the law of importation or the $T$-conditionality, and the further exploration of the relation between the generalized $F$-chain based construction and other methods like ordinal sums.

\backmatter





\bmhead{Acknowledgements}

Raquel Fernandez-Peralta was funded by the EU NextGenerationEU through the Recovery and Resilience Plan for Slovakia under the project No. 09I03-03-V04-00557. Juan Vicente Riera was supported by the R+D+i Project PID2020-113870GB-I00-“Desarrollo de herramientas de Soft Computing para la Ayuda al Diagnóstico Clínico y a la Gestión de Emergencias (HESOCODICE)”, funded by MICIU /AEI/10.13039/501100011033/.

\bmhead{Data availability} Data sharing not applicable to this article as no datasets were generated or analyzed during
the current study.

\section*{Declarations}

\bmhead{Conflict of interest} The authors declare no conflict of interest









\bibliography{sn-article}


\begin{thebibliography}{38}
\ifx \bisbn   \undefined \def \bisbn  #1{ISBN #1}\fi
\ifx \binits  \undefined \def \binits#1{#1}\fi
\ifx \bauthor  \undefined \def \bauthor#1{#1}\fi
\ifx \batitle  \undefined \def \batitle#1{#1}\fi
\ifx \bjtitle  \undefined \def \bjtitle#1{#1}\fi
\ifx \bvolume  \undefined \def \bvolume#1{\textbf{#1}}\fi
\ifx \byear  \undefined \def \byear#1{#1}\fi
\ifx \bissue  \undefined \def \bissue#1{#1}\fi
\ifx \bfpage  \undefined \def \bfpage#1{#1}\fi
\ifx \blpage  \undefined \def \blpage #1{#1}\fi
\ifx \burl  \undefined \def \burl#1{\textsf{#1}}\fi
\ifx \doiurl  \undefined \def \doiurl#1{\url{https://doi.org/#1}}\fi
\ifx \betal  \undefined \def \betal{\textit{et al.}}\fi
\ifx \binstitute  \undefined \def \binstitute#1{#1}\fi
\ifx \binstitutionaled  \undefined \def \binstitutionaled#1{#1}\fi
\ifx \bctitle  \undefined \def \bctitle#1{#1}\fi
\ifx \beditor  \undefined \def \beditor#1{#1}\fi
\ifx \bpublisher  \undefined \def \bpublisher#1{#1}\fi
\ifx \bbtitle  \undefined \def \bbtitle#1{#1}\fi
\ifx \bedition  \undefined \def \bedition#1{#1}\fi
\ifx \bseriesno  \undefined \def \bseriesno#1{#1}\fi
\ifx \blocation  \undefined \def \blocation#1{#1}\fi
\ifx \bsertitle  \undefined \def \bsertitle#1{#1}\fi
\ifx \bsnm \undefined \def \bsnm#1{#1}\fi
\ifx \bsuffix \undefined \def \bsuffix#1{#1}\fi
\ifx \bparticle \undefined \def \bparticle#1{#1}\fi
\ifx \barticle \undefined \def \barticle#1{#1}\fi
\bibcommenthead
\ifx \bconfdate \undefined \def \bconfdate #1{#1}\fi
\ifx \botherref \undefined \def \botherref #1{#1}\fi
\ifx \url \undefined \def \url#1{\textsf{#1}}\fi
\ifx \bchapter \undefined \def \bchapter#1{#1}\fi
\ifx \bbook \undefined \def \bbook#1{#1}\fi
\ifx \bcomment \undefined \def \bcomment#1{#1}\fi
\ifx \oauthor \undefined \def \oauthor#1{#1}\fi
\ifx \citeauthoryear \undefined \def \citeauthoryear#1{#1}\fi
\ifx \endbibitem  \undefined \def \endbibitem {}\fi
\ifx \bconflocation  \undefined \def \bconflocation#1{#1}\fi
\ifx \arxivurl  \undefined \def \arxivurl#1{\textsf{#1}}\fi
\csname PreBibitemsHook\endcsname

\bibitem[\protect\citeauthoryear{Baczy\'nski and Jayaram}{2008}]{Baczynski2008}
\begin{bbook}
\bauthor{\bsnm{Baczy\'nski}, \binits{M.}},
\bauthor{\bsnm{Jayaram}, \binits{B.}}:
\bbtitle{{F}uzzy {I}mplications}.
\bsertitle{Studies in Fuzziness and Soft Computing},
vol. \bseriesno{231}.
\bpublisher{Springer},
\blocation{Berlin, Heidelberg}
(\byear{2008})
\end{bbook}
\endbibitem

\bibitem[\protect\citeauthoryear{Beliakov et~al.}{2010}]{Beliakov2010}
\begin{bbook}
\bauthor{\bsnm{Beliakov}, \binits{G.}},
\bauthor{\bsnm{Pradera}, \binits{A.}},
\bauthor{\bsnm{Calvo}, \binits{T.}}:
\bbtitle{{A}ggregation {F}unctions: {A} {G}uide for {P}ractitioners}.
\bsertitle{Studies in Fuzziness and Soft Computing},
vol. \bseriesno{221}.
\bpublisher{Springer},
\blocation{Berlin, Heidelberg}
(\byear{2010})
\end{bbook}
\endbibitem

\bibitem[\protect\citeauthoryear{Cao and Hu}{2022}]{Cao2022}
\begin{barticle}
\bauthor{\bsnm{Cao}, \binits{M.}},
\bauthor{\bsnm{Hu}, \binits{B.Q.}}:
\batitle{On the ordinal sum of fuzzy implications: {N}ew results and the
  distributivity over a class of overlap and grouping functions}.
\bjtitle{Fuzzy Sets and Systems}
\bvolume{446},
\bfpage{93}--\blpage{123}
(\byear{2022})
\end{barticle}
\endbibitem

\bibitem[\protect\citeauthoryear{Calvo et~al.}{2002}]{Calvo2002}
\begin{bbook}
\bauthor{\bsnm{Calvo}, \binits{T.}},
\bauthor{\bsnm{Mayor}, \binits{G.}},
\bauthor{\bsnm{Mesiar}, \binits{R.}}:
\bbtitle{{A}ggregation {O}perators}.
\bsertitle{Studies in Fuzziness and Soft Computing},
vol. \bseriesno{97}.
\bpublisher{Springer},
\blocation{Verlag Berlin Heidelberg}
(\byear{2002})
\end{bbook}
\endbibitem

\bibitem[\protect\citeauthoryear{Dombi and Baczy\'nski}{2020}]{Baczynski2020B}
\begin{barticle}
\bauthor{\bsnm{Dombi}, \binits{J.}},
\bauthor{\bsnm{Baczy\'nski}, \binits{M.}}:
\batitle{{General Characterization of Implication's Distributivity Properties:
  The Preference Implication}}.
\bjtitle{IEEE Transactions on Fuzzy Systems}
\bvolume{28}(\bissue{11}),
\bfpage{2982}--\blpage{2995}
(\byear{2020})
\end{barticle}
\endbibitem

\bibitem[\protect\citeauthoryear{Fernandez-Peralta}{2025a}]{Fernandez-Peralta2025}
\begin{botherref}
\oauthor{\bsnm{Fernandez-Peralta}, \binits{R.}}:
A Comprehensive Survey of Fuzzy Implication Functions
(2025).
\url{https://arxiv.org/abs/2503.05702}
\end{botherref}
\endbibitem

\bibitem[\protect\citeauthoryear{Fernandez-Peralta}{2025b}]{Fernandez-Peralta2025B}
\begin{botherref}
\oauthor{\bsnm{Fernandez-Peralta}, \binits{R.}}:
Fuzzy Implicative Rules: A Unified Approach
(2025).
\url{https://arxiv.org/abs/2504.03000}
\end{botherref}
\endbibitem

\bibitem[\protect\citeauthoryear{Fernandez-Peralta
  et~al.}{2023}]{fernandez2023subgroup}
\begin{bchapter}
\bauthor{\bsnm{Fernandez-Peralta}, \binits{R.}},
\bauthor{\bsnm{Massanet}, \binits{S.}},
\bauthor{\bsnm{Gupta}, \binits{M.}},
\bauthor{\bsnm{Nanavati}, \binits{K.}},
\bauthor{\bsnm{Jayaram}, \binits{B.}}:
\bctitle{Subgroup discovery through sharp transitions using implicative type
  rules}.
In: \bbtitle{2023 IEEE International Conference on Fuzzy Systems (FUZZ)},
pp. \bfpage{1}--\blpage{6}
(\byear{2023}).
\bcomment{IEEE}
\end{bchapter}
\endbibitem

\bibitem[\protect\citeauthoryear{Fernandez-Peralta
  et~al.}{2021}]{Fernandez-Peralta2021}
\begin{barticle}
\bauthor{\bsnm{Fernandez-Peralta}, \binits{R.}},
\bauthor{\bsnm{Massanet}, \binits{S.}},
\bauthor{\bsnm{Mir}, \binits{A.}}:
\batitle{{On strict T-power invariant implications: Properties and
  intersections}}.
\bjtitle{Fuzzy Sets and Systems}
\bvolume{423},
\bfpage{1}--\blpage{28}
(\byear{2021})
\end{barticle}
\endbibitem

\bibitem[\protect\citeauthoryear{Fernandez-Peralta
  et~al.}{2022}]{Fernandez-Peralta2022B}
\begin{barticle}
\bauthor{\bsnm{Fernandez-Peralta}, \binits{R.}},
\bauthor{\bsnm{Massanet}, \binits{S.}},
\bauthor{\bsnm{Mir}, \binits{A.}}:
\batitle{{Characterization of generalized {$(h,e)$}-implications based on the
  characterization of {$(f,e)$} and {$(g,e)$}-implications}}.
\bjtitle{Information Sciences}
\bvolume{612},
\bfpage{1145}--\blpage{1170}
(\byear{2022})
\end{barticle}
\endbibitem

\bibitem[\protect\citeauthoryear{Fodor and Roubens}{1994}]{Fodor1994}
\begin{bbook}
\bauthor{\bsnm{Fodor}, \binits{J.}},
\bauthor{\bsnm{Roubens}, \binits{M.}}:
\bbtitle{{F}uzzy {P}reference {M}odelling and {M}ulticriteria {D}ecision
  {S}upport}.
\bpublisher{Kluwer Academic Publishers},
\blocation{Dordrecht}
(\byear{1994})
\end{bbook}
\endbibitem

\bibitem[\protect\citeauthoryear{González-Hidalgo et~al.}{2018}]{Gonzalez2018}
\begin{barticle}
\bauthor{\bsnm{González-Hidalgo}, \binits{M.}},
\bauthor{\bsnm{Massanet}, \binits{S.}},
\bauthor{\bsnm{Mir}, \binits{A.}},
\bauthor{\bsnm{Ruiz-Aguilera}, \binits{D.}}:
\batitle{Improving salt and pepper noise removal using a fuzzy mathematical
  morphology-based filter}.
\bjtitle{Applied Soft Computing}
\bvolume{63},
\bfpage{167}--\blpage{180}
(\byear{2018})
\end{barticle}
\endbibitem

\bibitem[\protect\citeauthoryear{García-Lapresta and {Marques
  Pereira}}{2008}]{GARCIALAPRESTA2008}
\begin{barticle}
\bauthor{\bsnm{García-Lapresta}, \binits{J.L.}},
\bauthor{\bsnm{{Marques Pereira}}, \binits{R.A.}}:
\batitle{The self-dual core and the anti-self-dual remainder of an aggregation
  operator}.
\bjtitle{Fuzzy Sets and Systems}
\bvolume{159}(\bissue{1}),
\bfpage{47}--\blpage{62}
(\byear{2008})
\end{barticle}
\endbibitem

\bibitem[\protect\citeauthoryear{Grabisch et~al.}{2009}]{Grabisch2009}
\begin{bbook}
\bauthor{\bsnm{Grabisch}, \binits{M.}},
\bauthor{\bsnm{Marichal}, \binits{J.-L.}},
\bauthor{\bsnm{Mesiar}, \binits{R.}},
\bauthor{\bsnm{Pap}, \binits{E.}}:
\bbtitle{{A}ggregation {F}unctions}.
\bsertitle{Encyclopedia of Mathematics and its Applications},
vol. \bseriesno{127}.
\bpublisher{Cambridge University Press},
\blocation{Cambridge}
(\byear{2009})
\end{bbook}
\endbibitem

\bibitem[\protect\citeauthoryear{Goguen}{1967}]{Goguen1967}
\begin{barticle}
\bauthor{\bsnm{Goguen}, \binits{J.A.}}:
\batitle{{L-fuzzy sets}}.
\bjtitle{Journal of Mathematical Analysis and Applications}
\bvolume{18}(\bissue{1}),
\bfpage{145}--\blpage{174}
(\byear{1967})
\end{barticle}
\endbibitem

\bibitem[\protect\citeauthoryear{Jayaram}{2006}]{Jayaram2006}
\begin{barticle}
\bauthor{\bsnm{Jayaram}, \binits{B.}}:
\batitle{{Contrapositive symmetrisation of fuzzy implications-Revisited}}.
\bjtitle{Fuzzy Sets and Systems}
\bvolume{157}(\bissue{17}),
\bfpage{2291}--\blpage{2310}
(\byear{2006})
\end{barticle}
\endbibitem

\bibitem[\protect\citeauthoryear{Jayaram}{2008a}]{Jayaram2008}
\begin{barticle}
\bauthor{\bsnm{Jayaram}, \binits{B.}}:
\batitle{{On the Law of Importation $(x \wedge y) \longrightarrow z \equiv (x
  \longrightarrow (y \longrightarrow z))$ in Fuzzy Logic}}.
\bjtitle{IEEE Transactions on Fuzzy Systems}
\bvolume{16},
\bfpage{130}--\blpage{144}
(\byear{2008})
\end{barticle}
\endbibitem

\bibitem[\protect\citeauthoryear{Jayaram}{2008b}]{Jayaram2008B}
\begin{barticle}
\bauthor{\bsnm{Jayaram}, \binits{B.}}:
\batitle{{Rule reduction for efficient inferencing in similarity based
  reasoning}}.
\bjtitle{International Journal of Approximate Reasoning}
\bvolume{48}(\bissue{1}),
\bfpage{156}--\blpage{173}
(\byear{2008})
\end{barticle}
\endbibitem

\bibitem[\protect\citeauthoryear{Jin et~al.}{2019}]{Jin2019}
\begin{barticle}
\bauthor{\bsnm{Jin}, \binits{L.}},
\bauthor{\bsnm{Mesiar}, \binits{R.}},
\bauthor{\bsnm{Kalina}, \binits{M.}},
\bauthor{\bsnm{Špirková}, \binits{J.}},
\bauthor{\bsnm{Borkotokey}, \binits{S.}}:
\batitle{Generalized phi-transformations of aggregation functions}.
\bjtitle{Fuzzy Sets and Systems}
\bvolume{372},
\bfpage{124}--\blpage{141}
(\byear{2019})
\end{barticle}
\endbibitem

\bibitem[\protect\citeauthoryear{Klement et~al.}{2000}]{Klement2000}
\begin{bbook}
\bauthor{\bsnm{Klement}, \binits{E.P.}},
\bauthor{\bsnm{Mesiar}, \binits{R.}},
\bauthor{\bsnm{Pap}, \binits{E.}}:
\bbtitle{Triangular Norms}.
\bsertitle{Trends in Logic},
vol. \bseriesno{8}.
\bpublisher{Kluwer Academic Publishers},
\blocation{Dordrecht}
(\byear{2000})
\end{bbook}
\endbibitem

\bibitem[\protect\citeauthoryear{Mendel}{2023}]{Mendel2023}
\begin{bbook}
\bauthor{\bsnm{Mendel}, \binits{J.M.}}:
\bbtitle{Explainable Uncertain Rule-Based Fuzzy Systems}.
\bpublisher{Springer},
\blocation{Cham}
(\byear{2023})
\end{bbook}
\endbibitem

\bibitem[\protect\citeauthoryear{Massanet et~al.}{2024}]{Massanet2024}
\begin{barticle}
\bauthor{\bsnm{Massanet}, \binits{S.}},
\bauthor{\bsnm{Fernandez-Peralta}, \binits{R.}},
\bauthor{\bsnm{Baczyński}, \binits{M.}},
\bauthor{\bsnm{Jayaram}, \binits{B.}}:
\batitle{On valuable and troubling practices in the research on classes of
  fuzzy implication functions}.
\bjtitle{Fuzzy Sets and Systems}
\bvolume{476},
\bfpage{108786}
(\byear{2024})
\end{barticle}
\endbibitem

\bibitem[\protect\citeauthoryear{Mesiar and
  Koles{\'a}rov{\'a}}{2019}]{Mesiar2019}
\begin{bchapter}
\bauthor{\bsnm{Mesiar}, \binits{R.}},
\bauthor{\bsnm{Koles{\'a}rov{\'a}}, \binits{A.}}:
\bctitle{{C}onstruction of {F}uzzy implication {F}unctions {B}ased on
  $f$-chains}.
In: \beditor{\bsnm{Hala{\v{s}}}, \binits{R.}},
\beditor{\bsnm{Gagolewski}, \binits{M.}},
\beditor{\bsnm{Mesiar}, \binits{R.}} (eds.)
\bbtitle{New Trends in Aggregation Theory. AGOP 2019}.
\bsertitle{Advances in Intelligent Systems and Computing},
vol. \bseriesno{981},
pp. \bfpage{315}--\blpage{326}.
\bpublisher{Springer},
\blocation{Cham}
(\byear{2019})
\end{bchapter}
\endbibitem

\bibitem[\protect\citeauthoryear{Mis et~al.}{2025}]{Mis2025}
\begin{botherref}
\oauthor{\bsnm{Mis}, \binits{K.}},
\oauthor{\bsnm{Kaczmarek-Majer}, \binits{K.}},
\oauthor{\bsnm{Baczyński}, \binits{M.}}:
Fuzzy linguistic summaries and the double negation property
(2025)
\doiurl{10.36227/techrxiv.173747538.86263707/v1}
\end{botherref}
\endbibitem

\bibitem[\protect\citeauthoryear{Munar et~al.}{2023}]{Munar2023}
\begin{botherref}
\oauthor{\bsnm{Munar}, \binits{M.}},
\oauthor{\bsnm{Massanet}, \binits{S.}},
\oauthor{\bsnm{Ruiz-Aguilera}, \binits{D.}}:
{A review on logical connectives defined on finite chains}.
Fuzzy Sets and Systems,
108469
(2023)
\end{botherref}
\endbibitem

\bibitem[\protect\citeauthoryear{Massanet et~al.}{2017}]{Massanet2017}
\begin{barticle}
\bauthor{\bsnm{Massanet}, \binits{S.}},
\bauthor{\bsnm{Recasens}, \binits{J.}},
\bauthor{\bsnm{Torrens}, \binits{J.}}:
\batitle{{Fuzzy implication functions based on powers of continuous t-norms}}.
\bjtitle{International Journal of Approximate Reasoning}
\bvolume{83},
\bfpage{265}--\blpage{279}
(\byear{2017})
\end{barticle}
\endbibitem

\bibitem[\protect\citeauthoryear{Massanet and Torrens}{2012}]{Massanet2012A}
\begin{barticle}
\bauthor{\bsnm{Massanet}, \binits{S.}},
\bauthor{\bsnm{Torrens}, \binits{J.}}:
\batitle{{Threshold generation method of construction of a new implication from
  two given ones}}.
\bjtitle{Fuzzy Sets and Systems}
\bvolume{205},
\bfpage{50}--\blpage{75}
(\byear{2012})
\end{barticle}
\endbibitem

\bibitem[\protect\citeauthoryear{Massanet and Torrens}{2013}]{Massanet2013}
\begin{barticle}
\bauthor{\bsnm{Massanet}, \binits{S.}},
\bauthor{\bsnm{Torrens}, \binits{J.}}:
\batitle{{On the vertical threshold generation method of fuzzy implication and
  its properties}}.
\bjtitle{Fuzzy Sets and Systems}
\bvolume{226},
\bfpage{32}--\blpage{52}
(\byear{2013})
\end{barticle}
\endbibitem

\bibitem[\protect\citeauthoryear{Nanavati et~al.}{2024}]{Nanavati2024}
\begin{barticle}
\bauthor{\bsnm{Nanavati}, \binits{K.}},
\bauthor{\bsnm{Gupta}, \binits{M.}},
\bauthor{\bsnm{Jayaram}, \binits{B.}}:
\batitle{Fuzzy implications - a (dis)similarity perspective}.
\bjtitle{International Journal of Approximate Reasoning}
\bvolume{168},
\bfpage{109145}
(\byear{2024})
\end{barticle}
\endbibitem

\bibitem[\protect\citeauthoryear{Reiser et~al.}{2013}]{Reiser2013}
\begin{barticle}
\bauthor{\bsnm{Reiser}, \binits{R.H.S.}},
\bauthor{\bsnm{Bedregal}, \binits{B.}},
\bauthor{\bsnm{Baczyński}, \binits{M.}}:
\batitle{{Aggregating fuzzy implications}}.
\bjtitle{Information Sciences}
\bvolume{253},
\bfpage{126}--\blpage{146}
(\byear{2013})
\end{barticle}
\endbibitem

\bibitem[\protect\citeauthoryear{Su et~al.}{2015}]{Su2015b}
\begin{barticle}
\bauthor{\bsnm{Su}, \binits{Y.}},
\bauthor{\bsnm{Xie}, \binits{A.}},
\bauthor{\bsnm{Liu}, \binits{H.-W.}}:
\batitle{{On ordinal sum implications}}.
\bjtitle{Information Sciences}
\bvolume{293},
\bfpage{251}--\blpage{262}
(\byear{2015})
\end{barticle}
\endbibitem

\bibitem[\protect\citeauthoryear{Su et~al.}{2016}]{Su2016}
\begin{barticle}
\bauthor{\bsnm{Su}, \binits{Y.}},
\bauthor{\bsnm{Zong}, \binits{W.}},
\bauthor{\bsnm{Liu}, \binits{H.-W.}}:
\batitle{Distributivity of the ordinal sum implications over t-norms and
  t-conorms}.
\bjtitle{IEEE Transactions on Fuzzy Systems}
\bvolume{24}(\bissue{4}),
\bfpage{827}--\blpage{840}
(\byear{2016})
\end{barticle}
\endbibitem

\bibitem[\protect\citeauthoryear{Trillas et~al.}{2008}]{Trillas2008}
\begin{barticle}
\bauthor{\bsnm{Trillas}, \binits{E.}},
\bauthor{\bsnm{Mas}, \binits{M.}},
\bauthor{\bsnm{Monserrat}, \binits{M.}},
\bauthor{\bsnm{Torrens}, \binits{J.}}:
\batitle{{On the representation of fuzzy rules}}.
\bjtitle{International Journal of Approximate Reasoning}
\bvolume{48}(\bissue{2}),
\bfpage{583}--\blpage{597}
(\byear{2008})
\end{barticle}
\endbibitem

\bibitem[\protect\citeauthoryear{Vemuri and Jayaram}{2014}]{Vemuri2014}
\begin{barticle}
\bauthor{\bsnm{Vemuri}, \binits{N.R.}},
\bauthor{\bsnm{Jayaram}, \binits{B.}}:
\batitle{{Representations through a monoid on the set of fuzzy implications}}.
\bjtitle{Fuzzy Sets and Systems}
\bvolume{247},
\bfpage{51}--\blpage{67}
(\byear{2014})
\end{barticle}
\endbibitem

\bibitem[\protect\citeauthoryear{Walker and Walker}{2002}]{Walker2002}
\begin{barticle}
\bauthor{\bsnm{Walker}, \binits{C.L.}},
\bauthor{\bsnm{Walker}, \binits{E.A.}}:
\batitle{{Powers of t-norms}}.
\bjtitle{Fuzzy Sets and Systems}
\bvolume{129}(\bissue{1}),
\bfpage{1}--\blpage{18}
(\byear{2002})
\end{barticle}
\endbibitem

\bibitem[\protect\citeauthoryear{Yi and Qin}{2017}]{Yi2017}
\begin{bchapter}
\bauthor{\bsnm{Yi}, \binits{Z.-H.}},
\bauthor{\bsnm{Qin}, \binits{F.}}:
\bctitle{{Extended Threshold Generation of a New Class of Fuzzy Implications}}.
In: \beditor{\bsnm{Fan}, \binits{T.-H.}},
\beditor{\bsnm{Chen}, \binits{S.-L.}},
\beditor{\bsnm{Wang}, \binits{S.-M.}},
\beditor{\bsnm{Li}, \binits{Y.-M.}} (eds.)
\bbtitle{Quantitative Logic and Soft Computing 2016}.
\bsertitle{Advances in Intelligent Systems and Computing},
vol. \bseriesno{510},
pp. \bfpage{281}--\blpage{291}.
\bpublisher{Springer},
\blocation{Cham}
(\byear{2017})
\end{bchapter}
\endbibitem

\bibitem[\protect\citeauthoryear{Yi et~al.}{2024}]{Yi2024}
\begin{barticle}
\bauthor{\bsnm{Yi}, \binits{Z.-H.}},
\bauthor{\bsnm{Yao}, \binits{L.-J.}},
\bauthor{\bsnm{Qin}, \binits{F.}}:
\batitle{{Some Properties of the Extended Threshold Generated Implications}}.
\bjtitle{Fuzzy Information and Engineering}
\bvolume{16}(\bissue{3}),
\bfpage{175}--\blpage{193}
(\byear{2024})
\end{barticle}
\endbibitem

\bibitem[\protect\citeauthoryear{Zhou}{2021}]{Zhou2021B}
\begin{barticle}
\bauthor{\bsnm{Zhou}, \binits{H.}}:
\batitle{Two general construction ways toward unified framework of ordinal sums
  of fuzzy implications}.
\bjtitle{IEEE Transactions on Fuzzy Systems}
\bvolume{29}(\bissue{4}),
\bfpage{846}--\blpage{860}
(\byear{2021})
\end{barticle}
\endbibitem

\end{thebibliography}

\end{document}